\definecolor{darkblue}{rgb}{0, 0, 0.5}
\useunder{\uline}{\ul}{}
\DeclareMathOperator{\Attn}{Attn}
\theoremstyle{plain}
\newtheorem{theorem}{Theorem}[section]
\newtheorem{proposition}[theorem]{Proposition}
\newtheorem{lemma}[theorem]{Lemma}
\theoremstyle{definition}
\newtheorem{definition}[theorem]{Definition}
\newtheorem{assumption}[theorem]{Assumption}
\theoremstyle{remark}
\newtheorem{remark}[theorem]{Remark}
\definecolor{mycolor1}{RGB}{84, 130, 53}
\definecolor{mycolor2}{RGB}{112, 48, 160}
\definecolor{mycolor3}{RGB}{132, 60, 12}
\definecolor{mycolor4}{RGB}{47, 85, 151}
\definecolor{newyellow}{RGB}{255, 255, 0}
\definecolor{lightblue}{RGB}{0, 255, 255}
\begin{document}

\title{Parrot Mind: Towards Explaining the Complex Task \\Reasoning of Pretrained Large Language Models with Template-Content Structure}






\author{%
  Haotong Yang$^{12}$\quad Fanxu Meng$^{1}$\quad Zhouchen Lin$^{123}$\thanks{Corresponding authors} \quad Muhan Zhang$^{1}$\footnotemark[1]\\
  $^1$Institution for Artificial Intelligence, Peking University\\
  $^2$Key Lab of Machine Perception (MoE),\\ School of Intelligence Science and Technology, Peking University\\
  $^3$Peng Cheng Laboratory\\
  \texttt{haotongyang@pku.edu.cn\quad fxmeng@stu.pku.edu.cn\quad \{zlin,muhan\}@pku.edu.cn}
  }


\newcommand{\fix}{\marginpar{FIX}}
\newcommand{\new}{\marginpar{NEW}}

%


\maketitle

\begin{abstract}
The pre-trained large language models (LLMs) have shown their extraordinary capacity to solve reasoning tasks, even on tasks that require a complex process involving multiple sub-steps. However, given the vast possible generation space of all the tasks, how the pretrained model learns the reasoning ability remains an open question.
We firstly propose that an \textit{intrinsic structural constraint} on the generated sequence of language-based reasoning --- we called it \textbf{template-content structure} (\textbf{T-C structure}) --- is the key to explain why LLMs can solve a large number of complex reasoning problems with limited training data by showing this structure can reduce the possible space from \textit{exponential level} to \textit{linear level}.
Furthermore, by generalizing this structure to the hierarchical case, we demonstrate that models can achieve task composition, further reducing the space needed to learn from linear to \textit{logarithmic}, thereby effectively learning on complex reasoning involving multiple steps. We provide both examples and formal theory of our T-C structure. We also experimentally validate the existence of the T-C structure in some current LLMs and its effectiveness for reasoning.

\end{abstract}
\section{Introduction}
\label{sec:intro}

The continuous evolution of pre-trained Large Language Models (LLMs)~\citep{brown2020language, chowdhery2022palm, openai2023gpt4} with ever-growing parameters and corpus sizes has notably augmented their capacity to solve various complex tasks in natural language without fine-tuning these tasks. These language-based reasoning ability ranges from arithmetic and symbolic logic to factual reasoning~\citep{qin2023chatgpt, liu2023evaluating, Yang_2022_empirical, bang2023multitask, tan2023evaluation}. 
Compared to the diversity and complexity of real-world problems, the current LLM training paradigm is extremely simple. The training goal is only to simulate the probability distribution of the next word given a partial sequence, which is called language modeling~\citep{peters-etal-2018-deep, radford2018improving}\footnote{In this paper, we use the term ``language modeling'' in a more general way to include some later stages of pretraining such as instruction tuning, code tuning and RLHF, because these training strategy can be seen as language modeling on specially-designed corpus}. A long-standing debate about whether LLMs can understand and reason is rooted in this mismatch. This doubt can be summarized as: \textit{whether LLMs are just \textbf{parroting} the training samples it has seen before}. 

A series of studies have explored the question of LLMs' capabilities both before and after they demonstrated their remarkable performance across a wide range of tasks. On the one hand, some researchers have attempted to highlight the significant dependence of LLMs on their training data, similar to a parrot mindlessly mimicking \textbf{without a deeper understanding} of the underlying content. As noted by \citet{Bender2021dangers}, earlier LLMs behaved as \textit{stochastic parrots}, essentially replicating the patterns found in their training data. \citet{zevcevic2023causalparrots} further argue that the apparent ability of certain current LLMs to do causal inference is primarily a result of memorizing and regurgitating pretrained causal knowledge. Recently, \citet{hu2024casebased} propose that LLMs perform reasoning by referencing cases similar to the current context, rather than by understanding the underlying reasoning rules.
On the other hand, other researchers emphasize that LLMs go beyond mere memory and mimicry. \citet{yu2023skillmix} evaluated LLMs' capacity to combine fundamental skills to generate novel compositions, revealing that the most advanced LLMs, like GPT-4~\citep{openai2023gpt4}, exhibit this compositional ability, and \citet{arora2023theory} provide a theoretical support for this ability. In an even more optimistic view, \citet{bubeck2023sparks} consider GPT-4 to be ``\textit{an early (yet still incomplete) version of an artificial general intelligence}'' based on their findings that GPT-4 can surpass humans in a wide spectrum of tasks.

Although LLMs have responded to this doubt with their undisputed effectiveness, exactly how their tremendous ability is acquired from their relatively simple training remains an open question. Exploring this mechanism helps us truly understand whether the current LLMs are still ``\textbf{parrots}'', have been \textbf{AGI}, or are developing between these two. In this article, we try to give a possible explanation of this mechanism. Our key observation is that by considering some structural constraints in the generation process, LLMs can acquire reasoning ability through just language modeling, or ``\textit{parroting}''. In other words, \textbf{a parrot can also reason if it learns the structure of language}.

So why do people believe that merely parroting is impossible to master reasoning? From the perspective of learning theory, since each step in the autoregressive generation process involves selecting one of all available tokens and continuing to generate based on this, its possible space increases \textbf{exponentially} with the length of the sequence, making even the largest models, as well as the most extensive training sets, look incredibly tiny in front of the hypothesis space. In this regard, it is reasonable to ask how LLMs can learn to reason various problems in a \textit{generalizable} way. 
This idea has been combined with a controversial argument in linguistics: \textbf{poverty of the stimulus}~\citep{chomsky1987language} by \citet{arora2023theory}. Supporters of this argument believe that in the process of a child learning a language, the examples he or she has been exposed to are insufficient to represent all the features of the language. Therefore, language learning cannot be purely empirical and must involve some understanding of ``universal grammar'', which is the same for LLMs. Understood from this perspective, a stochastic parrot (i.e. an LLM only trained on the next token prediction) seems to be far from understanding a language, let alone having the ability to reason.

In a word, learning a language needs some prominent structural properties. Fortunately, we indeed observe them in many reasoning problems. To solve these problems, the answer words can be \textbf{divided into two parts}: 1) the task-specific \textbf{template} that is relatively fixed in different questions of the same task, and 2) the relatively flexible \textbf{content} that varies with concrete questions. As a flow indicator, the template forms a skeleton to solve the task, guiding the reasoner to split a task into \textit{sub-tasks} and finish the task by filling the template in with the provided content. We provide an example in Figure~\ref{fig:framework}.
Since the template represents a shared solution process for various problems of the same task, the space that models need to learn is no longer exponential with the length of the sequence. Instead, it grows \textbf{linearly} with the number of templates (i.e.,\ tasks) and the sequence length.
As our main contribution, we will point out that this \textbf{widely existing template-content (T-C) structure} in natural language is the key to explain \textbf{why LLMs can generalize well with limited data} (compared to the exponential space) and then explains \textbf{how it is possible for a parrot to reason}. 
We give a more detailed illustration of the (im)possible results and the definition of the t-c structure in Section~\ref{sec:template-content_structure}. 

\begin{figure*}[t]
    \centering
    \vspace{-15pt}
    \includegraphics[width=\textwidth]{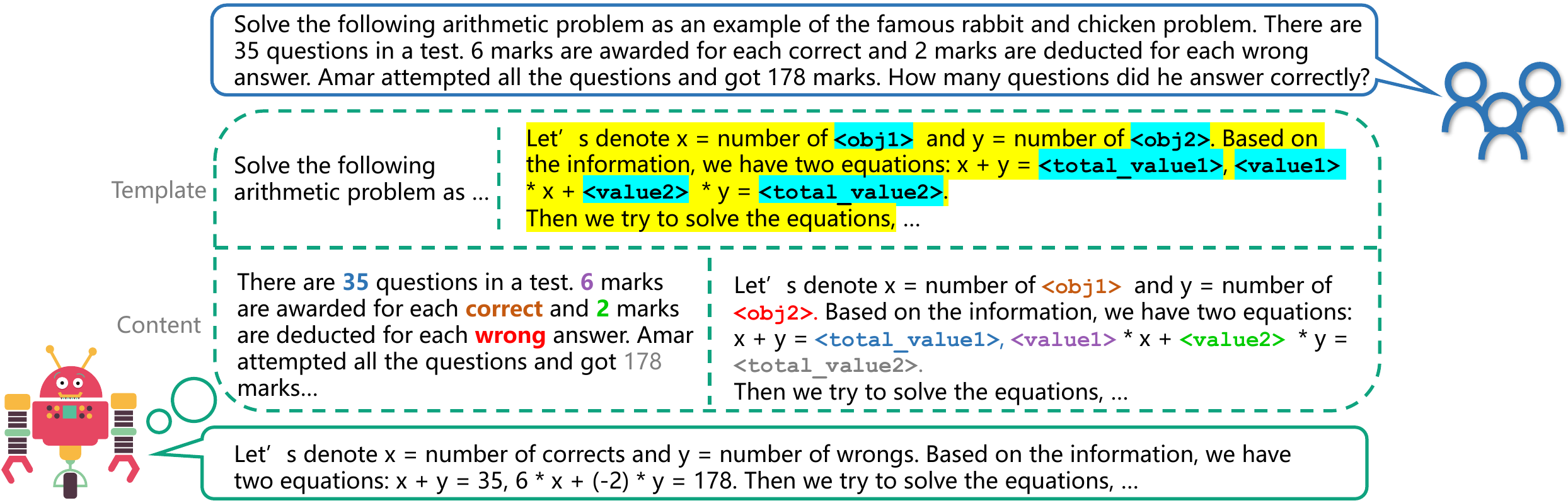}
    \vspace{-15pt}
    \caption{An illustration of the template-content structure. Given prompt and question: 1. The model will generate the \textbf{template} tokens (highlighted as \hl{yellow}) as a flow to solve the task according to the prompt, and some \textbf{content} placeholder (\sethlcolor{lightblue}\hl{blue}) in the template that needs to be filled in, which are displayed in the upper half of the dashed box. 2. The content generation with the guidance of the template could be understood as \textbf{pointing}, shown in the bottom half. Here, corresponding colors shows the pointing process. The combination of these two mechanisms makes reasoning possible.}
    \label{fig:framework}
    \vspace{-15pt}
\end{figure*}

Furthermore, we found that the aforementioned structure primarily elucidates how the model generalizes from one problem to another with the same template but different content, which we term \textbf{within-task generalization}. Nonetheless, for more intricate tasks, this explanation may be inadequate as it disregards the ability of LLMs to combine different sub-tasks. Therefore, we further extend the T-C structure into a hierarchical framework that accommodates varying levels of specificity.
Within this framework, the content of a higher-level template could be further separated into the sub-template and sub-content. When different templates are nested in this manner, these different templates can be combined to generate an overall answer, which involves a composition of different sub-tasks. 
Through this ``\textit{task composition}'' capability, LLMs can achieve generalization on tasks that involve multiple sub-tasks. This ability further reduces the space that LLMs need to remember to the \textbf{logarithm} of the entire task space.
In Section~\ref{sec:hierarchical}, we give many intuitive examples and illustrations about how and why \textbf{the hierarchical T-C structure can explain the task composition ability}.

We hope that our work can provide a new perspective and tool for understanding the reasoning ability of LLMs. Our contributions contain:
    1) We study the question how LLMs with language modeling pretraining can learn to solve reasoning tasks, and we firstly propose the \textbf{template-content} structure as a natural structure of natural language to solve complex tasks, which reduces the learning space from exponential to linear, making it possible for LLMs to achieve within-task generalization. 
    2) By expanding the T-C structure into hierarchical case, we further explain how LLMs can compose the reasoning ability on different sub-tasks to solve an unseen complex task involving multiple sub-tasks. This ``task composition'' ability further reduces the learning space to logarithm.
    3) We conduct experiments to provide evidence that the current LLMs demonstrate the template-content structure and this structure helps models reason.

\section{Background and related work}

\vspace{-3pt}
\textbf{Capacity of LLMs}\hspace{5pt}
Various studies aim to theoretically define the expressive ability of LLMs. Notable examples include investigations into the Turing completeness of Transformers~\citep{turingcomplete, hahn2020theoretical}. \citet{feng2023revealing} discuss the capacity to solve arithmetic tasks and dynamic programming with limited precision and context window. \citet{merrill2023expressive} further prove that Transformers with polynomial context window can solve the P-complete problems. 
Another representative work is the universal approximation theory (UAT)~\citep{yun2020transformers, jiang2023approximation, luoyour}, which demonstrates Transformers' ability to approximate any continuous function. These works give good theoretical guarantees but fail to answer what LLMs can learn in practice.

\vspace{-3pt}
\textbf{Explain the reasoning ability of LLMs}\hspace{5pt}
Numerous studies have explored reasoning ability from the perspective of in-context learning.
\citet{akyurek2023what} illustrated that Transformers can implicitly utilize gradient descent and get closed-form solutions for linear models through given examples. \citet{xie2022an} argue that LLMs acquire document-level concepts during training and utilize in-context examples to recognize the shared concept during inference. Contrary to these approaches, our framework emphasizes template creation and task breakdown rather than in-context learning. Other studies have delved into broader reasoning abilities. \citet{dziri2023faith} propose that LLMs excel on test samples with similar computation graphs to training samples, indicating a form of case-based reasoning. \citet{arora2023theory,yu2023skillmix} suggest that LLMs can amalgamate fundamental skills to cultivate new ones.
\citet{ramesh2023how} further support the idea through their experiments and suggest that intermediate generation is the key to this capacity. \citet{hou2023mechanistic} suggests that LLMs learn an implicit reasoning tree to do multi-step reasoning by experiments.
Very recently, \citet{wang2024understanding} propose that reasoning paths combine concepts into a complete reasoning process, where LLMs learn path aggregation from pretraining corpus to implement reasoning.
Compared to above works, we share a similar understanding that the reasoning ability comes from the combination of basic ``skills'', ``steps'' or ``concepts'', while our T-C structure provides a framework that is both more tangible -- being implementable at the word or token level -- and more rigorously defined.
\vspace{-3pt}

\textbf{Intrinsic dimension}\hspace{5pt}
A related concept is the \textit{intrinsic dimension}, with research by \citet{li2018measuring} showing that data can be effectively learned within a significantly lower-dimensional space than traditionally assumed. This concept, indicative of task complexity, aligns with 
\citet{aghajanyan-etal-2021-intrinsic}'s findings that LLMs exhibit a very low intrinsic dimension, possibly explaining the effectiveness in finetuning. Following this line, our T-C structure describes that the reasoning data inherently exists in a much lower dimension than perceived.

\vspace{-3pt}
\textbf{Template and content}\hspace{5pt}
\citet{ford2018importance} classify words into ``templates'' and ``contents'' based on grammar or frequency and proposes a ``two-pass'' generation. They first generate ``template'' words and placeholders for ``content'' and then replace placeholders, which slightly enhances language modeling. Unlike them, our T-C structure can be defined in a formal way. Additionally, we emphasize the current LLMs (as ``one-pass'' generation) can learn and also benefit from the T-C structure. \citet{madaan2022text} also investigates the different roles of words in the CoT and classify words as \textit{symbol}, \textit{pattern} and \textit{text}. Compared to their empirical divisions exhibiting variations across different tasks, our division maintains consistency thereby yielding generalizable conclusions.

\section{Template-content structure}
\label{sec:template-content_structure}
\subsection{Does language modeling leads to a mindless parrot?}
\label{ssec:impossible}
The most prevalent LLMs follow an autoregressive generative paradigm to solve the problem. Given a prompt and a question sequence, the model generates one next token according to the preceding tokens and then adds the generated token to the input sequence. The iteration is repeated until a final token is generated and all the generated tokens are connected to form the answer. We can formally describe the process as the following definition:
\begin{definition}[Prompt-leading autoregressive models for answer generation]
    For a task and the corresponding prompt sequence $\bm{p}$, a question $\bm{q}$ and a partial answer $\bm{a}_{1:t}$\footnote{We use $\bm a_{1:t}$ to denote the first to $t$-th token sub-sequence or the empty sequence if $t=0$.}, all of which belong to $\mathcal{T}^*$ (the power set of the token space $\mathcal{T}$), a model $\mathcal{M}$ generates the answer autoregressively (until generating the end of text token):
        $\mathcal{M}(\bm{p},\bm{q},\bm{a}_{1:t})= a_{t+1}\in\mathcal{T}.$
    \label{def:language_modeling}
\end{definition}
To training $\mathcal{M}$ as a neural network, it is expected to fit the conditional distribution $p(a_{t+1}\left| \bm p, \bm q, \bm{a}_{1:t} \right.)$. 
\textit{Language modeling} pretraining strategy believes that models can learn the conditional distribution during this pretraining stage. In this stage, the loss is calculated between each predicted distribution and the groundtruth one-hot distribution~\citep{radford2018improving} at each token:
\vspace{-8pt}
\begin{equation}
    L=\sum_{t=1}^{T} \ell\left(\mathcal{M}(\bm p, \bm q, \bm{a}_{1:(t-1)}), \hat{p}(a_t|\bm p, \bm q, \bm{a}_{1:(t-1)})\right),
    \vspace{-8pt}
\end{equation}
where $(p,q,a)$ is the training sequence\footnote{We define the loss only on answer sequence $a$ to align with the sft training technique, which is generally-used in instruction tuning and is believed crucial for reasoning~\citep{ouyang2022training}. Calculating the loss on the entire sequence has no impact on our claim.}, $\mathcal{M}(\bm p, \bm q, \bm{a}_{1:(t-1)})$ is the output distribution of the LLM and the $\hat{p}$ is the one-hot distribution where the element corresponding to the groundtruth next token as 1 and others as 0, which serves as an approximation of the groundtruth distribution $p$. The loss function $\ell$ is commonly chosen as cross-entropy. Because the next token distribution is the only supervision when training, it seems that LLMs mimic the language in corpus like a parrot.

As we mentioned in Section~\ref{sec:intro}, without any constraint on the sequence space, the number of distributions that should be learned will be exponentially increased as $|\mathcal{T}|^{t}$ and rapidly exceeds what even the largest models with the largest data sets can learn. For example, GPT-3 takes $|\mathcal{T}|\approx 10^4$ and the total number of the distributions achieves $10^{20}$ with a sequence length $T=5$, surpassing its $\sim10^{11}$ parameters and $\sim10^{11}$ training tokens.
It seems that \textbf{learning is an impossible task} in this setting. We give a more formal explanation through VC-dimension theory in Appendix~\ref{app:impossible}. Shortly, because of the polynomial upper bound of VC-dimension in the total parameters and computation of neural networks~\citep{KARPINSKI1997169}, models cannot express all the exponential number of output possibilities. 
So, a question needs explanation: how can an autoregressive model pre-trained through language modeling learn various complex tasks?

We must admit that \textbf{there should be an intrinsic structure} in the data distribution of natural language, so that most conditional distributions that need to be learned and generated are restricted to this structure, and it greatly reduces the burden that the model needs to learn and make the learning is possible. In the next section, we will propose a structure that splits the language sequence into two parts and illustrate that this structure is natural for language sequences used in reasoning tasks.

\subsection{Template-content structure}
\label{ssec:tc_structure}
We find that these two properties are intuitive and hold for most reasoning tasks:
\begin{proposition}[Template, informal]
To solve a type of similar reasoning problems, many tokens of answer sequence is almost certain. These tokens form a relatively fixed thinking structure or skeleton, and they are shared for these problems - we call such a type of problems a \textbf{task} and these relatively fixed tokens as \textbf{templates}. Prompts, which often provide task information and hints, are also categorized as templates.
\vspace{-5pt}
\end{proposition}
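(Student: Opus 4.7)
The plan is to support this informal proposition through a combination of an operational criterion and structural observation, since the claim is a qualitative statement about natural-language reasoning rather than a purely formal assertion. First, I would formalize ``almost certain'' by considering the conditional entropy $H(a_{t+1}\mid \bm{p}, \bm{q}, \bm{a}_{1:t})$ under the data distribution restricted to a fixed task, i.e.\ averaging only over question sequences $\bm{q}$ that share the same task description. A position $t$ is declared a \textbf{template position} if this task-restricted conditional entropy falls below a chosen threshold $\epsilon$, and the token occupying such a position is called a template token. This gives a concrete reading of the intuition ``almost certain given the task'' that is compatible with the subsequent formal counting arguments in Section~\ref{ssec:tc_structure}.

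Next, I would walk through representative families of reasoning outputs to exhibit that template positions are abundant and shared across questions within the same task. For arithmetic word problems solved by chain-of-thought, tokens such as ``Let'', ``='', ``Therefore'', step-number markers and the surrounding punctuation are essentially forced by the task framing, while only the numerical operands and the final value vary with $\bm{q}$; for deductive reasoning tasks, connectives such as ``If'', ``then'', ``Suppose'' and ``hence'' play the analogous role; for translation or retrieval-style tasks, the surrounding response scaffold plays it. The goal is not that any single example is a proof, but that across the reasoning tasks considered in this paper the low-entropy positions collectively form a coherent skeleton, and the complementary high-entropy positions line up with what one would intuitively call the problem-specific content. The inclusion of prompt tokens as part of the template then falls out for free: since $\bm{p}$ is held fixed across all questions of the task, its conditional entropy is trivially zero and the formal criterion is satisfied.

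The main obstacle I anticipate is that the template--content boundary is inherently soft: the threshold $\epsilon$ is a modelling choice, and some reasoning tasks admit genuinely different valid solution skeletons (different orderings of sub-steps, different intermediate variables), so the identified template positions depend on how finely we partition questions into tasks. I would handle this by stating the proposition as a structural observation that becomes rigorous once a task and threshold are fixed, deferring the sharp quantitative consequences, such as the reduction of the hypothesis space from exponential to linear, to the formal template--content decomposition introduced in the remainder of Section~\ref{ssec:tc_structure}, which this informal proposition is designed to motivate.
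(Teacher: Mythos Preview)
The paper does not prove this proposition at all: it is deliberately labeled ``informal'' and serves as a motivating observation. The only justification the paper offers in situ is the chicken-and-rabbit example of Figure~\ref{fig:framework}, pointing out that after ``\dots we have two'' the next token is effectively forced to be ``equations''. The actual formal content is postponed to Appendix~\ref{app:definition_of_tc}, where template tokens are axiomatized not through any entropy threshold but through an abstract independence requirement (Definition~\ref{def:groundtruth_formal}): the generated token at a template position must be invariant to the preceding content tokens once the preceding template tokens are fixed, and this is packaged as an \emph{assumption} about natural language rather than something derived.

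Your route is therefore more elaborate than the paper's and genuinely different in flavor. The conditional-entropy criterion you propose is a reasonable operationalization of ``almost certain'', and it is in fact closer in spirit to the paper's \emph{experimental} methodology in Section~\ref{ssec:variance}, which uses the variance of the output distribution under content replacement as the empirical discriminator, than to the paper's formal definition. One substantive mismatch to be aware of: low task-restricted conditional entropy and content-independence are not the same condition. A position could have low entropy yet still depend on content (e.g., the content deterministically selects one of two near-certain tokens), which your threshold would label template but the paper's independence axiom would not. The paper avoids this issue by simply postulating the existence of a classification function satisfying the independence property, whereas you try to \emph{construct} the classification from a statistic of the data distribution. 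Your final paragraph correctly anticipates that the boundary is soft and threshold-dependent; the paper's resolution is not to tune $\epsilon$ but to absorb the ambiguity into the assumption and, later, into the hierarchical refinement of Section~\ref{sec:hierarchical}.
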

In Figure~\ref{fig:framework}, we illustrate an example of an ancient Chinese math problem known as the "rabbit and chicken" problem. These problems can be solved using a system of linear equations with two variables. The fixed procedure, highlighted in yellow, involves extracting information to form the equations and solving them for the answer. For instance, when given the tokens ``\dots we have two'', the expected next token is ``equations'', rather than words grammatically inappropriate like ``hello''or unrelated like ``targets''. 
\begin{proposition}[content, informal]
    Other tokens exhibit more flexibility and vary based on the specific problem within a given task type. They are used to distinguish the differences between each problem in the same type of task. \textbf{Replacing them has minimal impact on the generation of the template.} We call these tokens as \textbf{content}. At the same time, we classify the problem as content.
    \vspace{-5pt}
\end{proposition}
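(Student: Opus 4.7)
The plan is to make the informal claim rigorous by operationalizing ``minimal impact on the generation of the template'' as an approximate conditional independence statement, and then to verify the two halves of the proposition (flexibility of content and invariance of template) on the running rabbit-and-chicken example.

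First I would define, for a given task with prompt $\bm p$, the \emph{template positions} as those indices $t$ at which the conditional distribution $p(a_t \mid \bm p, \bm q, \bm a_{1:t-1})$ is essentially independent of $\bm q$ across the whole task --- concretely, the positions at which this distribution has vanishing entropy and agrees across questions (as already described in the companion template proposition). The \emph{content positions} are then declared to be the complement. This definition immediately yields the first two claims of the content proposition: by construction, content positions vary with the specific $\bm q$ and therefore serve to distinguish individual problems within the task.

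Next I would state and argue the substantive claim (``replacing them has minimal impact'') as an invariance lemma: for any template position $t$, if two partial answers $\bm a_{1:t-1}$ and $\bm a'_{1:t-1}$ arising from different questions $\bm q, \bm q'$ of the same task agree on their template sub-sequence, then the next-token distributions at position $t$ coincide up to small error. Specializing the lemma to $t=1$ and varying $\bm q$ recovers the final sentence of the proposition (``we classify the problem as content''), since swapping the whole question leaves the distribution over template skeletons essentially unchanged. I would sanity-check both ingredients on Figure~\ref{fig:framework}: in the rabbit-and-chicken problem, swapping the specific numerical values of heads and legs changes only which equations are written down, while template tokens such as ``equations'' following ``we have two'' remain almost deterministic.

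The \textbf{main obstacle} I anticipate is that the template/content partition is not canonical. Many tokens --- grammatical fillers, connective punctuation, common short words --- are only weakly variable and could plausibly be placed on either side of the boundary. Making ``minimal impact'' precise therefore requires fixing a numerical threshold on the conditional entropy (or equivalently on the invariance error in the lemma above), and any such threshold is somewhat arbitrary. I expect the authors to sidestep this difficulty by treating the partition as a structural \emph{assumption} on the data distribution, to be imported into the quantitative theorems in later sections, rather than as something to be derived from first principles; this is consistent with the proposition being labeled \emph{informal} and serving as the companion of the template proposition rather than as a standalone theorem.
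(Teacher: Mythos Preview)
Your proposal is essentially on target, and your final paragraph correctly anticipates the paper's move: the informal proposition is not proved at all but is instead formalized in the appendix as a \emph{definition} (of a ``groundtruth'' T/C classification together with a T-C model) and then adopted as an \emph{assumption} about natural language.

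The one substantive divergence is in how the invariance is made precise. You propose an \emph{approximate} conditional-independence statement with a numerical threshold on entropy or invariance error, and you define template positions first (via vanishing entropy) and content as the complement. The paper instead posits an abstract binary classification function $\mathcal{F}$ on tokens and requires \emph{exact} equalities: if two partial sequences are T/C-aligned and agree on their template tokens, then the T/C label of the next token is identical, and if that label is $T$ the next token itself is identical. No threshold appears; the slack you worry about is absorbed entirely into the idealization. Your entropy-based route would give a more data-driven and testable criterion (and indeed resembles the variance-based classifier the paper uses in its experiments), whereas the paper's exact-equality route yields cleaner downstream propositions about within-task generalization at the cost of being an avowed idealization rather than something one could verify on a real model without further approximation.
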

In Figure~\ref{fig:framework}, we've highlighted the content in blue, representing specific problem information like numerical values (e.g., \textbf{35} questions, \textbf{6} marks) and objects within the problem. These contents can be replaced and do not affect the intrinsic task ``chicken-and-rabbit problem''. The \textbf{formal definition} of the template and content can be found in Appendix~\ref{app:definition_of_tc}, which could be summarized as a binary token classification with a \textbf{uni-directed} dependency.

The T-C structure can describe the patterns of natural language for complex reasoning tasks, because most of them are \textit{highly structured} and \textit{require a skeleton or decomposition} of tasks such as CoT to complete. Think of the two-step solving scheme: first sketching a basic outline (like a draft) and then filling in specifics. For example, when we solve math problems, we first decide the steps to take and then address each using given details, as depicted in Figure~\ref{fig:framework}. This can be seen as having a general ``\textit{template}'' and specific ``\textit{contents}''\footnote{We admit that the structure does not hold for any task, such as some purely natural language tasks, which do not appear to be obviously structured - such as translation, chatting, etc. Our main focus is on \textbf{complex reasoning problems}.}.
In our experiments (Section~\ref{sec:experiments}) we will show that some most common LLMs, especially those with stronger capacity, clearly \textbf{exhibit two different behaviors} towards different tokens, demonstrating that our modeling here is reasonable.

\subsection{T-C structure makes learning possible}
\label{ssec:t-c_make_possbile}
With the rationality, the next important question is that: \textbf{whether and why learning becomes possible with the T-C structure by reducing the possible space}.
Here we give two aspect illustration. For the \textbf{template} generation, because it is fixed when the preceding template tokens are given, it will not exponentially increase the possible space of conditional distributions. For example, with the fact that the next token should be `\textit{equations}' after `\textit{we have two}', there is no need to learn the distributions $p(a|\dots\textit{we have two xx})$ except for ``\textit{xx}'' is ``\textit{equations}''. Additionally, content words have no impact on template word generation. So there is no need to learn different distributions for a template token with different contents. So at each position, the number of distribution needed to learn is linear to the number of the possible templates (as well as the number of tasks).

As for content words, we further observed that their generation is also simple. They roughly satisfy a ``pointing'' generation pattern. With a suitable template, the capacity of the content filling is \textit{template-dependent pointing}. The generated template leaves blanks to be filled in with specific information, and also provides descriptions or ``roles'' for these blanks.
In Figure~\ref{fig:framework}, with a template like ``x = number of \texttt{<obj1>}'', the left job for the content part is to find the corresponding object, property and value in the problem and copy it (maybe with sightly modification like declension or plural). It is not surprising that the language models have the pointing ability, because it is also the basic ability to understand the semantics and finish many classic NLP tasks such as named entity recognition \citep{chiu2016named, li2020unified} and translation. Formally, we have the following two propositions:

In a word, with T-C structure, the possible space of the distributions needed to learn in pretraining stage to implement reasoning has been considerably reduced, maintaining an order of magnitude approximately linear with the number of tasks times the length of sequence, which makes learning possible.
Here, we want to further explain what the term ``learning'' means. We describe it as a ``\textbf{within-task generalization}''. With the structure-content structure, the procedure to solve a task has been expressed in the template and the template is fixed for the task. It means if a model learns the T-C structure and also learn the template for one task, it has captured the steps to solve the task. Templates can be learned from a few training samples in the huge corpus, but combined with the content pointing ability, all the problems within the task can be solved using the template, which we called ``\textbf{within-task generalization}''.

\begin{proposition}[Transformers can learn the T-C structure]
\label{prop:exist}
    There exists Transformers can learn the T-C structure, which means the generation of template tokens is only based on the preceding template tokens. We denote them as \textbf{T-C Transformers}.
\end{proposition}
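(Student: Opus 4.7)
The plan is to exhibit a specific Transformer architecture whose forward pass provably satisfies the template-independence property, and then invoke universal approximation to argue that such an architecture is rich enough to realize whatever conditional distribution over template tokens is required by the data. Concretely, I would augment the input embedding with an additional \emph{type indicator} coordinate $\tau_i \in \{0,1\}$ attached to position $i$, with $\tau_i=1$ marking template positions and $\tau_i=0$ marking content positions. The existence of a well-defined assignment of $\tau$ at the data level is granted by the formal T-C definition in Appendix~\ref{app:definition_of_tc}, so from the model's perspective $\tau$ can be treated either as side information or as the output of a short learned classifier head that reads local context.

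Given this tagging, I would construct a two-branch Transformer with a routing rule at the output. The \emph{template branch} is a stack of standard self-attention blocks in which every attention head adds a bias $-C(1-\tau_j)$ to the key at position $j$ for a constant $C$ large enough that, after softmax, attention from any query onto a content position is numerically negligible. Thus the hidden state produced by the template branch at any position is a function of the preceding template tokens only. The \emph{content branch} is an unrestricted standard Transformer with no such masking. A lightweight router, driven by a one-step-ahead predictor of the next-token type, selects the template branch's logits whenever a template token is being emitted and the content branch's logits otherwise. By construction, the distribution this model assigns to template tokens is measurable with respect to the sigma-algebra generated by the preceding template tokens alone, which is exactly the property the proposition demands.

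To finish, I would appeal to universal-approximation results for Transformers (e.g.\ \citet{yun2020transformers}) applied to the restricted template branch. Because that branch, though masked, is still a full Transformer operating on the subsequence of template tokens, it can approximate any continuous conditional distribution on that subsequence to arbitrary precision; analogously for the content branch. Composing the two with the router yields a single Transformer realising a T-C-structured generator, establishing existence.

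The step I expect to be the main obstacle is \emph{how the next-token type bit is produced at inference time} without inadvertently re-introducing dependence of template predictions on content tokens. In the simplest version of the argument I would show that the predictor only needs the preceding template prefix, since the template skeleton determines by definition where content slots appear. The more delicate alternative, in which a learned classifier infers $\tau$ from short local context, requires a separate check that this inference does not leak content information back into the template pathway. A secondary issue is verifying that the key-bias masking I use is realisable inside a standard Transformer block with no non-standard components; this reduces to observing that the additive bias factors through the usual linear map from key embeddings into the softmax, so the resulting model is still a bona fide Transformer in the sense of the earlier references.
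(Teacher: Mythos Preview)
Your construction is sound and reaches the same conclusion as the paper, but by a different mechanism. The paper (Proposition~\ref{prop:exist_formal}, Figure~\ref{fig:combine}) does not enforce template-independence architecturally at all: it decomposes the ideal T-C generator $\mathcal{M}_I$ abstractly as $g_I\bigl(f_{T,I}(\cdot),f_{C,I}(\cdot)\bigr)$, where $f_{T,I}$ is \emph{defined} to be a causal sequence-to-sequence function invariant to content tokens (the invariance is a property of the target function, not of the network), then approximates each of $f_{T,I},f_{C,I},g_I$ by a separate causal Transformer via the causal UAT proved in Theorem~\ref{lemma:universal}, and finally merges the three into one network by running $f_T,f_C$ in parallel heads with block-diagonal feed-forward layers and stacking $g$ on top. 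Your route instead \emph{hard-wires} the invariance through an attention bias $-C(1-\tau_j)$ and an explicit type-driven router. The paper's approach sidesteps precisely the obstacle you flag---where the type bits $\tau$ come from---because $f_{T,I}$ receives the raw sequence and the classification $\mathcal{F}$ is simply absorbed into whatever causal function UAT is asked to approximate; nothing needs to be tagged or predicted in advance. Conversely, your version is more transparent about the actual attention pattern that realises the independence, whereas the paper treats UAT as a black box. One small caution on your side: invoking \citet{yun2020transformers} for the masked template branch is not quite immediate, since that branch is a constrained Transformer on the full sequence rather than a vanilla Transformer on the template subsequence; the paper's abstract-decomposition route avoids this wrinkle because its $f_{T,I}$ is an unconstrained causal function on the full input, to which the causal UAT applies directly.
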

\begin{proposition}[The T-C Transformers can achieve the \textbf{within-task generalization}]
    A T-C transformer which has learned the template from a training sample can solve any problem within the task, by continuously writing following the prompt and the question to generate an answer sequence.
\end{proposition}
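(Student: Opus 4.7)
The plan is to argue by induction on the position $t$ in the answer sequence that each token the T-C Transformer emits on a new question $\bm{q}$ of the task coincides with a correct answer token, given only that it has previously seen one training sample that carries the same template. The induction hypothesis is that the partial answer $\bm{a}_{1:t}$ generated so far agrees, on its template positions, with the shared template of the task, and on its content positions with a valid filling derived from $\bm{q}$. The natural case split is along the binary classification from the T-C structure: at step $t+1$, the ground-truth next token is either a template token or a content token, and these two cases require distinct arguments.

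First I would handle the template case. By Proposition~\ref{prop:exist}, at a template position the output distribution of a T-C Transformer depends only on the preceding template tokens together with the prompt $\bm{p}$ (itself a template block). Under the inductive hypothesis the preceding template tokens coincide with those of the shared task template, and the prompt is identical across all instances of the task by assumption. Hence the distribution evaluated now is identical to the one evaluated during training on the seen sample, so since the model has ``learned the template'' from that sample the predicted template token must match the shared template's next token. This closes the inductive step on template positions.

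Next I would handle the content case. Here I would invoke the template-dependent \emph{pointing} mechanism described in Section~\ref{ssec:t-c_make_possbile}: the generated template exposes typed slots (roles such as \texttt{<obj1>}, numerical fields, etc.) and filling each slot reduces to extracting the matching span of $\bm{q}$. Treating pointing as a capability the T-C Transformer already possesses (paralleling its use in named-entity recognition and extractive QA, as the paper cites), each content slot is filled with the correct span of $\bm{q}$. The essential closure observation is that emitting these new content tokens does not derail subsequent template generation: this is precisely what Proposition~\ref{prop:exist} guarantees, since template predictions are invariant to the preceding content tokens. Chaining the two cases, the inductive invariant is preserved until the end-of-text token, so the concatenated output is a valid answer to $\bm{q}$.

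The main obstacle, and the place I would invest the most care, is the content step. Proposition~\ref{prop:exist} gives a clean template-to-template guarantee, but the pointing ability is only motivated in the paper, not derived. I would therefore either (i) lift pointing to an explicit auxiliary assumption on the T-C Transformer (the cleanest route), or (ii) prove a small companion lemma realizing the slot-to-span extraction via an attention head over the tokens of $\bm{q}$ keyed by the slot type exposed in the template. A secondary subtlety is justifying that a \emph{single} training sample suffices to lock in the task's template: this reduces to the determinism of template prediction under the T-C structure, combined with the assumption that the training objective has been driven near zero on that sample. With these ingredients in place, the induction closes and the proposition follows.
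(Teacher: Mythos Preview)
Your inductive scheme with the template/content case split is exactly the paper's approach, and your treatment of the template case (matching template tokens via the content-invariance guaranteed by the T-C model, together with the ``remember'' assumption on the one training sample) coincides with the paper's argument. The paper also explicitly carries along the T/C-alignment invariant through the induction, using causality of $\mathcal{F}$, which you only sketch; you should make that step explicit.

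The substantive difference is that you are trying to prove a \emph{stronger} statement than the paper's formal version. The paper's formal proposition (Proposition~\ref{prop:answer_generation_formal}) only asserts that the generated answer agrees with the training sample on \emph{template} positions; it says nothing about content correctness. Accordingly, in the content case the paper's proof is a single line: ``If the generated token is a content token, it has satisfied the requirement of the proposition, because the proposition only claims the same template tokens.'' The ``pointing'' ability you plan to invoke or prove is, in the paper, left as a separately motivated assumption in Section~\ref{ssec:t-c_make_possbile} and is never part of the formal argument. So the obstacle you flag as the hardest step is one the paper deliberately sidesteps by weakening the formal claim; the informal wording ``solve any problem within the task'' is then obtained by combining the proved template-matching with the assumed content-pointing, rather than by proving both inside one proposition. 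Your option~(i) --- lifting pointing to an explicit assumption --- is therefore closest in spirit to what the paper actually does.
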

More detailed definitions, conditions, formal theorems and proofs of these propositions are included in the Appendix~\ref{app:tc-possible}.

\section{Hierarchical template-content structure}
\label{sec:hierarchical}
Above, we demonstrate that the T-C Transformers can achieve the within-task generalization. In order to further enhance the applicability of our explanation, in this section, the capability of within-task generalization can be achieved through nesting to combine different sub-tasks, so that it can cover some very complex scenarios when it may be hard to find a complete template from \textbf{ONE} training sample. Specifically, we extend the template-content structure to the \textit{hierarchical} and \textit{nested} case. This extension entails content corresponding to a template at a given level being decomposable into \textbf{sub-template and sub-content} at the next level. 
\vspace{-5pt}
\renewcommand{\ULdepth}{1.8pt}
\begin{table*}[]
\vspace{-20pt}
\caption{A hierarchical template-content example, where the content is further decomposed into sub-template and sub-content. The different levels of the template are shown as \uline{underline} ($T_1$), \textbf{bold} ($T_2$), and \textit{italic} ($T_3$), with different indents.}
\vspace{-7pt}
\label{tab:hierachy_example}
\centering
\resizebox{0.95\textwidth}{!}{%
\begin{tabular}{lllll}
\toprule
\multicolumn{5}{l}{[Prompt]: \uline{Solve the arithmetic problem step by step.} \textit{Melanie} \textbf{will be} \textit{18} \textbf{years old in} \textit{10} \textbf{years},}\uline{what is} \textbf{the} \textbf{current age} \textbf{of} \textit{Melanie}\uline{?}\\ \midrule
\multicolumn{5}{l}{[Hierarchical T-C structure]: \uline{First, let's identify} \texttt{\textcolor{Green}{\textless{}target value\textgreater{}}}. \uline{According to the problem,} \texttt{\textcolor{violet}{\textless{}information in question\textgreater{}}}.}\\ 
\multicolumn{5}{l}{\uline{This means that} \texttt{\textcolor{blue}{\textless{}write in equation\textgreater{}}}...}\\ \midrule
$\quad$    $\quad$                                     & \multicolumn{4}{l}{\texttt{\textcolor{Green}{\textless{}target value\textgreater{}}}: \texttt{\textcolor{olive}{\textless{}obj\textgreater{}}}\textbf{'s} \texttt{\textcolor{orange}{\textless{}property\textgreater{}}}}                                 \\ 
    & $\quad$  $\quad$                                      & \multicolumn{3}{l}{\texttt{\textcolor{olive}{\textless{}obj\textgreater{}}}: \textit{Melanie},\ \texttt{\textcolor{orange}{\textless{}property\textgreater{}}}: \textit{current age}}                                                                                                      \\ \cmidrule{2-5} 
    & \multicolumn{4}{l}{\texttt{\textcolor{violet}{\textless{}information in question\textgreater{}}}: \textit{Melanie} \textbf{will be} \textit{18} \textbf{years old in} \textit{10} \textbf{years}}                                                                                                  \\ \cmidrule{2-5} 
    & \multicolumn{4}{l}{\texttt{\textcolor{blue}{\textless{}write in equation\textgreater{}}}: \texttt{\textcolor{Thistle}{\textless{}variable\textgreater{}}} \ \textbf{+} \texttt{\textcolor{RubineRed}{\textless{}value1\textgreater{}}} \  \textbf{=} \texttt{\textcolor{Brown}{\textless{}value2\textgreater{}}}}                                                                        \\  
    &                                                & \texttt{\textcolor{Thistle}{\textless{}variable\textgreater{}}}: \textit{age},\quad\texttt{\textcolor{RubineRed}{\textless{}value1\textgreater{}}}: \textit{10},\quad \texttt{\textcolor{Brown}{\textless{}value2\textgreater{}}}: \textit{18}                                                            &                                         &                                         \\ \midrule
    \multicolumn{5}{l}{[Final generation]: \uline{First, let's identify} \textit{Melanie}\textbf{'s} \textit{current age}. \uline{According to the problem},}\\ 
    \multicolumn{5}{l}{\textit{Melanie} \textbf{will be} \textit{18} \textbf{years old in} \textit{10} \textbf{years}. \uline{This means that} \textit{age} \textbf{+} \textit{10} \textbf{=} \textit{18}...}\\ \bottomrule
\end{tabular}%
}
\vspace{-5pt}
\end{table*}
\subsection{Why we need task composition -- some examples}
\vspace{-5pt}
Our T-C structure can explain some tasks well, such as the \textit{chicken-and-rabbit} problem in Figure~\ref{fig:framework} and the \textit{concate-the-last-letter} task\citep{wei2022chain} shown in our experiments section Figure~\ref{fig:classifier} (above). In these two scenes, the content is the objects and numbers in the problem and the words and letters, respectively. But when problems become more complex, the ``\textbf{task}'' with the same template could be too small and too specific, such as more arithmetic problems in SingleEQ~\citep{koncel2015parsing} or GSM8k~\citep{gsm8k} . 
Table~\ref{tab:hierachy_example} provides an example of an arithmetic problem in the SingleEQ dataset where the answer is generated by GPT-4. Considering a non-hierarchical T-C structure, the names ``\textit{Melanie}'' and the numbers $18$, $10$ should be classified as content, but the ``\textit{current age}'', \textit{in \dots years} cannot be replaced because they determine the operation ``+'' so they should be part of the template. It means this template is only helpful to solve problems about \textit{the current age}. This task seems to be \textbf{too specific}, so it is still far from our expectation of generalization.
This phenomenon will be more severe for more complex problems. Considering the following problem from a more complex math dataset GSM8k, 

\textit{``Randy has 60 mango trees on his farm. He also has 5 less than half as many coconut trees as mango trees. How many trees does Randy have in all on his farm?''}

To solve the problem, we should first calculate the ``5 less than half'' as an intermediate result and then add it to the number of mango trees to get the final result. The same template can only help for problems also involving intermediate results calculated from a value and then adding to this value.
This makes our explanation restricted to very detailed generalizations. 
We find a ``\textbf{task composition}'' ability is necessary for this scenario.
For example, assuming there is a problem with a similar first step to the ``mango and coconut trees'' problem --- the process of solving the intermediate variables, but in the next step where the variable is used in another way, we should expect that the model can also learn to how to solve the first step with the help of \textit{this} sample.
\vspace{-5pt}
\subsection{Hierarchical T-C structure leads to task composition}
\vspace{-5pt}
\label{ssec:hierarchical_tc_task_composition}

In the hierarchical T-C structure, the content of the current level could be further decomposed into sub-template and sub-content continuously. For example, the first level content $C_1$ can be decomposed into $T_2$ and $C_2$, and this process continues for $C_2$ into $T_3$ and $C_3$ and so on, until the decomposition is detailed enough. The final structure resemble nested templates labeled as $T_1, \dots, T_n$ (with $T_n$ being content without further divisions). The uni-directed dependence is also required where generating a $T_i$ token is independent to any $T_j$ token where $j>i$. We also give tokens in the prompt and question the same hierarchical T/C classification, dividing them into each level according to different levels of information. 

Back to the example in Table~\ref{tab:hierachy_example}, we demonstrate the nesting relationship between different levels by progressively expanding the content.  This presents a hierarchical structure from coarse- to fine-grained content generation. The prompt sequence ``will be \dots years old in \dots years'' ($T_2$) corresponds to the sub-template ``\dots + \dots = \dots'' ($T_2$), which can be combined with sub-content ``age'', ``10'',``18'' ($T_3$) together to serve as the content \small\texttt{\textcolor{blue}{\textless{}write in equation\textgreater{} }}\normalsize  of $T_1$.
The different levels of templates are corresponding to different levels of specificity, aligning with diverse task segmentation granularity, ranging from the most general to the most specific. In this example, the first level template (\uline{underline}) about a high-level instruction suitable for many arithmetic problem (finding the target value, extracting information in the problem, writing the equation, and returning the answer). The more specific second level template is about converting the age difference into a formula. This separation makes it possible to define a more general task and learn from different training samples. Another example with more levels is in Appendix~\ref{app:hierarchical_generation} Figure~\ref{fig:hierarchy_example}.

\begin{wrapfigure}[15]{r}{0.45\linewidth}
    \vspace{-10pt}
    \includegraphics[width=1\linewidth]{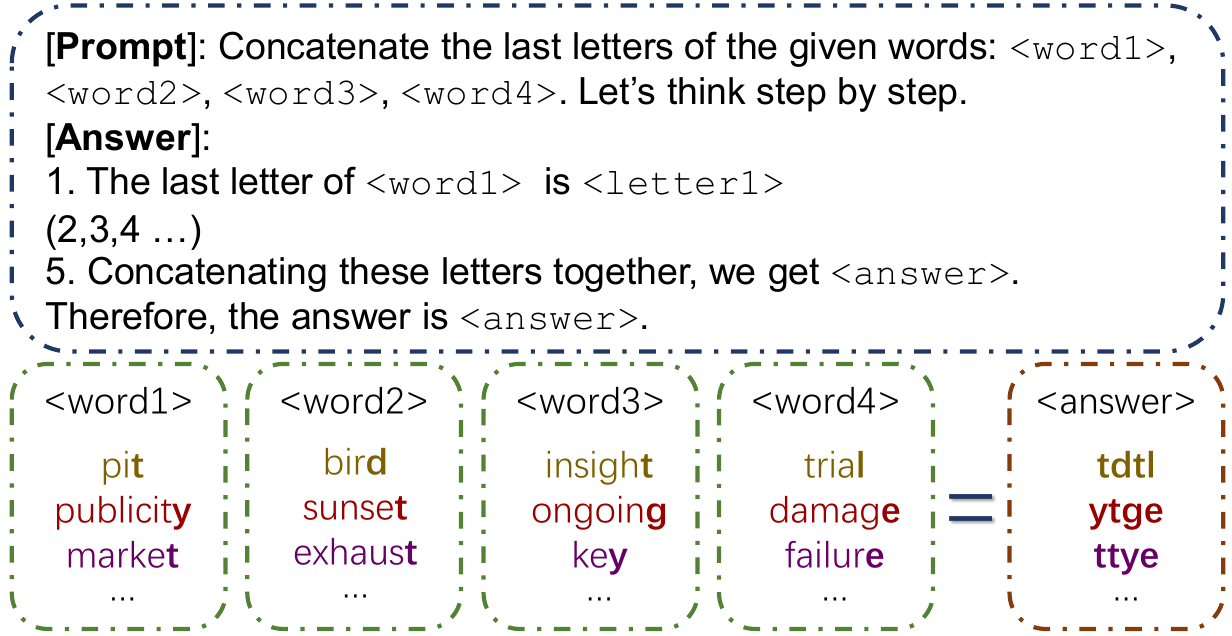}
    \caption{The concatenate-last-letter dataset. The task is to concatenate the last letters of several words together. The template tokens are generated by GPT-4 and fixed, while all the contents (\small\texttt{<word>,<letter>,<answer>}\normalsize) varies.}
    \label{fig:dataset}
    \vspace{-5pt}
\end{wrapfigure}

And in the ``mango and coconut trees'' example from GSM8k, consider the following T-C decomposition. 
The first level template give a general skeleton, which could be like: ``To solve the \small\texttt{\textless{}target\textgreater{}}\normalsize, we should first solve \small\texttt{\textless{}intermediate\textgreater{}}\normalsize. We can solve it by \small\texttt{\textless{}formula\textgreater{}}\normalsize \dots  With the number of \small\texttt{\textless{}intermediate\textgreater{}}\normalsize, we can \dots''  This general template is suitable for a variety of mathematical problems.
Then, several sub-templates focus on solving the intermediate value: the number of coconut trees, which should be filled into the first level content \small\texttt{\textless{}formula\textgreater{}}\normalsize.
To solve the half of mango tress numbers: 60/2=30, two templates like ``half ... /2='' (translating the word ``half'' into an equation ``$\dots/2$''), and ``60/2=30'' (solving the formula) can be helpful. 
The following steps are similar. 

From this process, we find that the answer to this problem can be split into different template fragments, each of which can be independently learned from different training samples and all of which compose a complete answer sequence of the original complex question. This means that \textbf{the number of tasks the model needs to learn is logarithmic in relation to the total number of tasks} with the composition ability.
This property composability ensures that our model can naturally generate this answer by learning these samples separately. We leave formal theorems in Appendix~\ref{app:hierarchical_generation} including the definition of hierarchical T-C structure, the existence of corresponding Transformers, and the theorems that ensure hierarchical answer generation.

\vspace{-10pt}
\section{Experiments}
\label{sec:experiments}

\subsection{Variance of the output}

\label{ssec:variance}
Our T-C structure claims that \textit{the generation should be independent of the content tokens when the next position is template}. 
To test if the real-word models indeed have the T-C structure, we create a T-C dataset.
We use GPT-4~\citep{openai2023gpt4} to obtain answer sequences in the concatenate-last-letter task~\citep{wei2022chain} and labeled the \textit{letters} and \textit{words} as \textit{content} and \textit{the rest} as \textit{template}. After replacing content with other words and letters, we acquire sequences with the same template and the content replacement.
The dataset is illustrated in Figure~\ref{fig:dataset} with details are in Appendix~\ref{app:aligned_dataset}.

\begin{wrapfigure}[17]{r}{0.4\linewidth}
\vspace{-12pt}
\captionof{table}{The correlation between the distinguishablility of T/C tokens and the reasoning performance.}
\vspace{-10pt}
\begin{center}
\resizebox{1\linewidth}{!}{
\begin{tabular}{lcccc}
\toprule
\textbf{Model} & \textbf{AUC-ROC} & \textbf{DMV} & \textbf{CoT} & \textbf{Accuracy} \\ \midrule
Random Guess & 0.5 & 0 & - & - \\
Oracle Model & 1.00 & 1.00 & - & - \\ \hdashline
GPT2-335m      & 0.87             & 0.15         & ×                    & 0.0                        \\
GPT2-774m      & 0.87             & 0.28         & ×                    & 0.0                        \\
GPT2-1.5b      & 0.97             & 0.27         & ×                    & 0.0                       \\
OPT-1.3b       & 0.96             & 0.29         & ×*          & 0.0                        \\
OPT-13b        & 0.99             & 0.42         & ×                    & 0.0                        \\
OPT-30b        & 1.00             & 0.46         & ×                    & 0.0                        \\
Llama2-7b      & 0.99             & 0.72         & $\checkmark$**       & 9.6                       \\
Llama2-13b     & 1.00             & 0.80         & $\checkmark$                    & 16.6                       \\
Llama2-70b     & 1.00             & 0.82         & $\checkmark$                    & 28.4 \\ \bottomrule
\end{tabular}}
\end{center}
\label{tab:variance_and_reason_ability}

* \footnotesize The OPT models cannot generate CoT answers~\citep{zhang2022opt}, so the accuracy is 0.\\
** \footnotesize Llama-2 models have CoT ability to do some complex reasoning~\citep{touvron2023llama}.
\vspace{-20pt}
\end{wrapfigure}

We input these sentences into various open-source models and measure the variance of the output distributions at each position. We calculate the variance for each dimension of output and then add them up. 
The low variance means for different input, the output at this position is relatively fixed while the high variance means the generation will vary depend on the content replacement.
According to our definition, a model has learned the T-C structure should show \textbf{lower} variance for \textbf{template} tokens and \textbf{higher} variance for \textbf{contents}. Here we report the difference of mean variance (DMV) between content and template tokens and AUC-ROC where we use a threshold based on the variance to classify T/C tokens. Here the AUC-ROC measures whether the content and template can be distinguished and the DMV further measures the degree of distinction between the two. The results is shown in Table~\ref{tab:variance_and_reason_ability}. We find most of LLMs shows capacity to distinguish these two types of tokens. For these LLMs with better reasnoning ability, the distinction is significant (with AUC-ROC $\approx 1$ and a large DMV.) 

Another interesting result is that a model's ability to differentiate between T/C seems to correlate with its size and reasoning capabilities.
The results consistent with our theory: \textbf{clear T/C distinction indicates better reasoning ability}. 
To illustrate the correlation between the distinguishablility of T/C tokens and the reasoning performance, we report whether the model has the capacity to generate CoT answer (according to some reports) and the accuracy on the dataset with CoT in Table~\ref{tab:variance_and_reason_ability}. It gives us confidence that (1) the ability to clearly differentiate T/C can serve as a criterion for judging a model's reasoning capability and (2) our framework will likely fit future powerful LLMs.

\subsection{Variance-based autoregressive T/C classifier}
\vspace{-5pt}

To better illustrate the differentiated behavior, we introduce an \textbf{autoregressive} T/C classifier based on the variance. Unlike the experiments in Section~\ref{ssec:variance} which requires manually labeling and write the content list for the whole sentences, we now only do so for the \textbf{prompt}. For any sentence, starting from the word right after the prompt, we \textbf{iteratively} predict the T/C classification word-by-word. For each position, by inputting the \textbf{preceding partial sentences} with several different content replacements, we measure the output variance at the current position. If the variance surpasses a predefined threshold, it is categorized as content, and we record the model's generation as the replacing tokens for it. Otherwise, we classify it as a template and directly add the original word to all perturbed sentences to ensure the same template. This process is repeated until the sentence is fully classified. A pseudo-code is available in Appendix~\ref{app:algorithm}.
\begin{figure}[t]
\vspace{-20pt}
    \centering
    \includegraphics[width=0.45\linewidth]{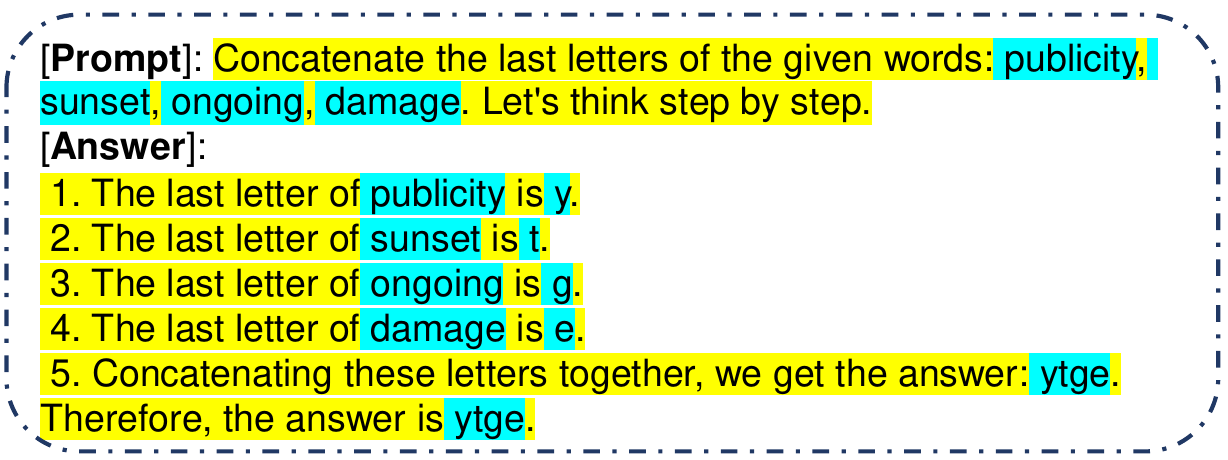}
    \includegraphics[width=0.45\linewidth]{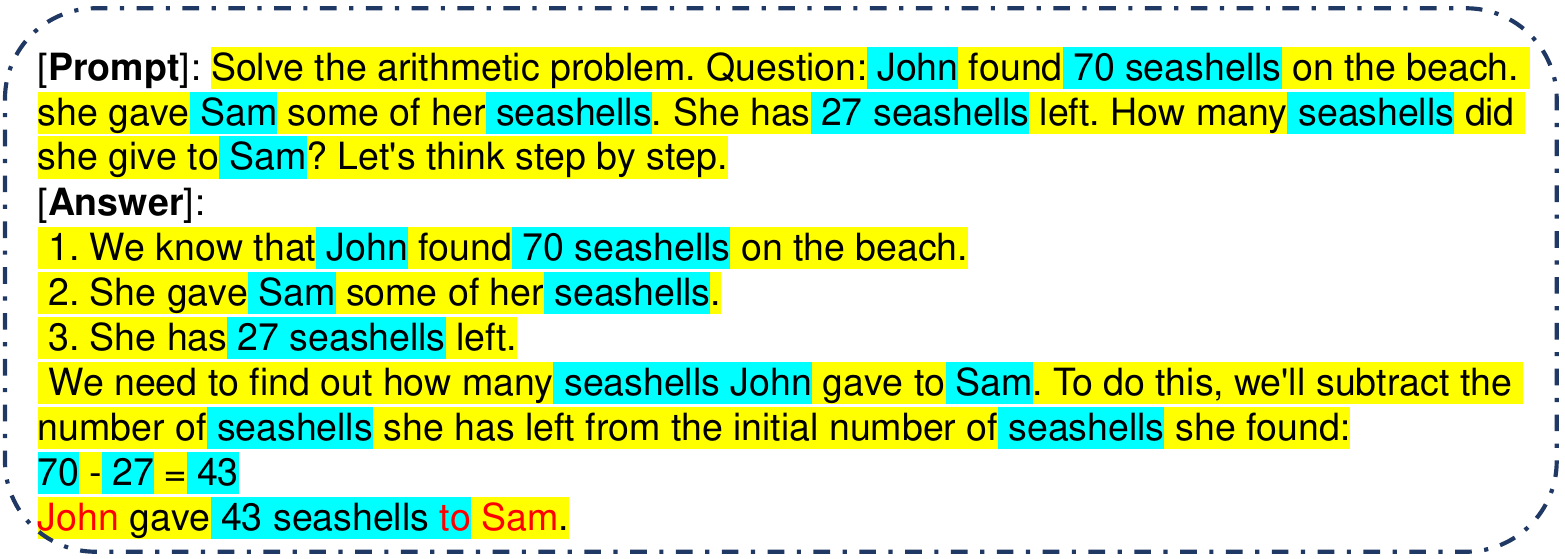} 
    \vspace{-5pt}
    \caption{The T/C classification generated by the autoregressive classifier based on a Llama-2-70b model. Template: \hl{yellow}, content: \sethlcolor{lightblue}\hl{blue}. \textbf{Left}: concatenate-last-letter. \textbf{Right}: SingleEQ. We mark the token whose classification conflicts with human intuition as \textcolor{red}{red}.}
    \label{fig:classifier}
    \vspace{-15pt}
\end{figure}
We test sentences derived from the aforementioned dataset (concatenate-last-letter task)
and a more complex arithmetic dataset SingleEQ~\citep{koncel2015parsing}.
We choose some typical content including the names, objections, and Arabic numbers. The results are shown in Figure~\ref{fig:classifier} and Appendix~\ref{app:more_results_of_tc_classification}. Both of the results are also consistent with our intuition at most positions, which supports our theoretical framework. More details are in Appendix~\ref{app:more_autoregressive_classify}.
\vspace{0pt}
\subsection{T-C structure can improve the performance}
\vspace{0pt}
\begin{wrapfigure}[14]{r}{0.4\linewidth}
    \centering
    \includegraphics[width=0.95\linewidth]{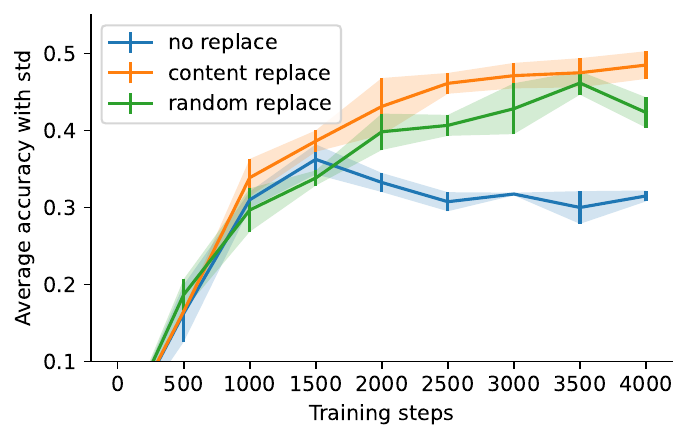}
    \vspace{-5pt}
    \caption{The performance with content replacement, random replacement and no replacement. The shadow is the standard error in 3 times experiments. }
    \label{fig:data_augment}
    \vspace{-10pt}
\end{wrapfigure}
The next question is \textbf{whether the structure can actually help models reason}. Here we try to teach a Llama-2-7b model the T-C structure explicitly and test whether the performance will be improved.
We construct a chicken and rabbit dataset and replace the content during the model training process to help the model learn the TC structure. (Details of dataset is in Appendix~\ref{app:chicken_and_rabbit_dataset}.) The test accuracy is shown in Figure~\ref{fig:data_augment}. In order to avoid the impact of content replacement only as a kind of data enhancement, we also use random replacement as data enhancement, where we randomly replace words with their synonyms~\citep{wei-zou-2019-eda}. We find that our explicit content replacement improves the model's inference performance compared these two baseline. This shows that the model performance has indeed been enhanced through learning the T-C structure. 
\vspace{-5pt}
\section{Conclusions and limitations}
\vspace{-5pt}

This article starts from a longstanding question: whether LLMs are merely mimicking like a parrot, illustrate a constraint on the LLMs generation should be the key to explain its reasoning ability. We introduce the template-content structure, which is an intrinsic structure in language-based reasoning and constrains the learning space for LLMs, enabling LLMs to achieve generalization on inference tasks by learning task-fixed templates from the training data distribution. Additionally, we extend this structure to a higher structural level, demonstrating that LLMs can integrate diverse tasks, further minimizing the learning space necessary for inference tasks.

There are still some limitations of this work.
For our modeling, there is still some gap between our framework and the practical application, such as the alignment, sampling and so on. (Appendix~\ref{app:discussion}). We acknowledge that further refinement of our framework is necessary.
For our experiment, we acknowledge that our experiments rely on limited datasets and models.

\section{Ethics Statement}
Our study focuses on the source of LLMs reasoning ability. Studying the generation mechanism of LLMs can be related to how to promote LLMs to avoid giving false and harmful answers and eliminating model hallucinations. However, the research in this article is not directly related to these social impacts.

\section{Reproducibility}
We attach great importance to the reproducibility of our work. As a work that focuses on theoretical analysis, we give the assumptions and proofs of each theorem as formally and as detailedly as possible; the formal framework is the goal of our work in itself. Although limited by space and coherence, much of the more formal presentation has been added to the Appendix. Regarding the algorithm used in the experiment, we describe in detail all the details including the generation of the data set and the implementation of the algorithm (see Appendix~\ref{app:experiment}). At the same time, we also provide codes in the supplementary materials, through which the experimental results in the article can be directly reproduced.

\newpage
\bibliographystyle{colm2024_conference}
\bibliography{ref.bib}

\newpage
\appendix

\section{The formal definition of T-C structure and formal results}
\label{app:tc-possible}
\subsection{Preliminary}
First let us formally define the causal model, which means the $i$-th output only depends on the preceeding tokens ($\leq i$ tokens).
\begin{definition}[Causal sequence-to-sequence function]
    The sequence-to-sequence function $f:\mathcal{D}\to\mathbb{R}^{n\times d}$ (where $\mathcal{D}\subseteq\mathbb{R}^{n\times d}$) is a causal function, if and only if for any two input $\bm{X}$ and $\bm{X}'$, and \textbf{any} $i\in[n]$, we have:
    $
        f(\bm{X})_{1:i}=f(\bm{X}')_{1:i},\text{ if }\bm{X}_{1:i}=\bm{X}'_{1:i},
    $
    where $\bm{X}_{1:i}$ means the first $i$ rows of $\bm{X}$. Because the first $i$ outputs only depend on the first $i$ inputs, we also denote the causal function as $f:\mathbb{R}^{i\times d}\to\mathbb{R}^{i\times d}$ for any $i\leq n$.
\label{def:causals2s}
\end{definition}

For a Transformers with causal mask like the series of GPT, it is a causal model so we call it a \textit{causal Transformer}.

\subsection{The definition of T-C structure}
\label{app:definition_of_tc}
Here, we define the template and content as a binary classification on the tokens in sequences:
\begin{definition}[Template and content, T/C]
    \label{def:template_and_content}
    Function $\mathcal{F}: \mathbb{N}\times\mathcal{S}\to\{T,C\}$ is called a \textbf{T/C classification function}, where $\mathcal{S}\subseteq\mathcal{T}^*$.\footnote{Here, we define this function only on a subset rather than the entire sequence set because we believe that this T/C classification is meaningful only in ``normal'' natural language rather than in random or garbled text.} The function takes a token sequence $\bm{a}$ and an index $i$ as input and gives the binary classification of the indexed token, denoted as $\mathcal{F}(i;\bm{a})$, abbreviated as $\mathcal{F}(a_i)$. We also use $\mathcal{F}(\bm{a})$ to denote the T/C sequence of the whole \textbf{sequence}.
\end{definition}
 To align with the autoregressive generation and causal model, in the following part, we always assume the T/C classification function (as a sequence-to-sequence function) should be a \textbf{causal} function.
 
We consider the distinct behavior between templates and content as an inherent characteristic of natural language, where the generation of templates is independent of specific contents, while contents relies on templates. 
We require this characteristic by introducing the definition of the \textbf{groundtruth} classification function and its corresponding \textbf{template-content} model (T-C model, which means a model with the save behavior defined in T-C structure).

\begin{definition}[The groundtruth classification and the template-content generation model]
    If a T/C classification function $\mathcal{F}$ and an autoregressive generation model $\mathcal{M}$ satisfy the following requirements, we call the function $\mathcal{F}$ as a \textbf{groundtruth} T/C classification and the model $\mathcal{M}$ as a \textbf{template-content (T-C)} model. The requirements are:
    for any prompt $\bm p$, question $\bm q$ and $\bm q'$, partial answer $\bm a_{1:t}$ and $\bm a'_{1:t}$, (1) if $\mathcal{F}(\bm p, \bm q, \bm a_{1:t}) = \mathcal{F}(\bm p, \bm q', \bm a'_{1:t})$ (T/C alignment) and $a_s=a_s'$ for all $1\leq s \leq t$ such that $\mathcal{F}(a_s)=T$ (the same template), then the T/C classification of the next token is the same:
    \begin{subequations}
        \begin{equation}
            \mathcal{F}\left( \mathcal{M}(\bm p, \bm q, \bm a_{1:t}) \right) = \mathcal{F}\left( \mathcal{M}(\bm p, \bm q', \bm a'_{1:t}) \right),
            \vspace{5pt}
            \label{sequ:next_tc}
        \end{equation}
        and (2) if $\mathcal{F}(\bm p, \bm q, \bm a_{1:t}) = \mathcal{F}(\bm p, \bm q', \bm a'_{1:t})$ (T/C alignment), $a_s=a_s'$ for all $1\leq s \leq t$ such that $\mathcal{F}(a_s)=T$ (the same template), and \textbf{further} $\mathcal{F}\left( \mathcal{M}(\bm p, \bm q, \bm a_{1:t}) \right) = \mathcal{F}\left( \mathcal{M}(\bm p, \bm q', \bm a'_{1:t}) \right)=T$, then
        \begin{equation}
            \mathcal{M}(\bm p, \bm q, \bm a_{1:t}) = \mathcal{M}(\bm p, \bm q', \bm a'_{1:t}).
            \label{sequ:next_token}
        \end{equation}
    \end{subequations}
    \vspace{-10pt}
\label{def:groundtruth_formal}
\end{definition}
Because the definition is somehow complex, here we first illustrate the underlining idea. 
The T-C model and the groundtruth classification defines such an \textbf{ideal} sequence generation schema: every token is classified into template or content, and for two sequences 1) when the preceding T/C sequences align and the template tokens at the corresponding positions are \textbf{the same}, then the classification of the next token to be generated is always \textbf{the same} (which is invariant to different content tokens in the preceding sequences), 2) furthermore, if the next token is a \textit{template}, then the exact token is invariant to different content tokens in the preceding sequences, but only depend on the template tokens.

Additionally, we always consider prompts as templates and questions as contents. For instance, ``the same template'' requires the same prompt while the questions can be different. 
With some approximations, we believe that for natural language, such ideal modeling is appropriate, that is,

\begin{assumption}[Natural language has the template-content structure]
There \textbf{exists} an autoregressive model $\mathcal{M}$ and a T/C classification function $\mathcal{F}$ that satisfy the requirements of the definition of the \textbf{T-C} model and the \textbf{groundtruth} classification.
\end{assumption}
The intuition behind this assumption is that we believe natural language inherently exhibits a hierarchical semantic phenomenon, allowing us to separate the more functional parts from the more specific parts, as mentioned in Section~\ref{ssec:tc_structure}, demonstrated in Figure~\ref{fig:framework} and also talked in \citet{ford2018importance}.
\textbf{Notice that} \textit{here we use $\mathcal{M}$ to describe our ideal T-C-based autoregressive generation schema}, instead of \textit{a specific parametric model or an algorithm}. This ideal model $\mathcal{M}$ describes the underline structure of natural language, especially for language used in the word-based reasoning. 
We give some empirical evidence that the natural language as well as real-world LLMs indeed \textbf{has this T-C structure} in experiment (Section~\ref{sec:experiments}).
We also give a finer version with hierarchical templates in Section~\ref{sec:hierarchical} and also some discussion such as generating a distribution instead of a token in Appendix~\ref{app:discussion}. Here, we focus on this binary setting for simplicity.

\subsection{The existence of the template-content Transformer}
\label{ssec:existence}
Here we will give the formal description and the proof of the Proposition~\ref{prop:exist}: the existence of the T-C Transformers. Specifically, we prove that for a groundtruth classification function $\mathcal{F}$ and its corresponding ideal T-C model $\mathcal{M}_I$, there is a Transformer can simulate the behavior of this model $\mathcal{M}_I$.
\begin{proposition}
    For any groundtruth classification function $\mathcal{F}$ and a T-C model $\mathcal{M}_I$, there exists a Transformer $\mathcal{M}$ can generate the same output of $\mathcal{M}_I$, which means that for any preceding sequence $\bm p, \bm q, \bm a_{1:t}$, $\mathcal{M}(\bm p, \bm q, \bm a_{1:t})=\mathcal{M}_I(\bm p, \bm q, \bm a_{1:t})$.
\label{prop:exist_formal}
\end{proposition}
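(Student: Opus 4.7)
The plan is to reduce the claim to a universal approximation result applied to the specific causal sequence-to-sequence function induced by $\mathcal{M}_I$. Concretely, $\mathcal{M}_I$ defines a deterministic map $f$ sending any prefix $(\bm p, \bm q, \bm a_{1:t})$ to a next token in $\mathcal{T}$. Because sequences have bounded length and $\mathcal{T}$ is finite, the domain of $f$ is a finite set, and the constraints in Definition~\ref{def:groundtruth_formal} are properties of the \emph{values} that $f$ takes, not architectural restrictions; any causal model that agrees with $f$ pointwise will automatically inherit those constraints. So it suffices to exhibit a Transformer that matches $f$ on every valid prefix.

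First, I would observe that Transformers with a causal mask compute causal sequence-to-sequence maps in the sense of Definition~\ref{def:causals2s}. By the universal approximation results for Transformers cited earlier (Yun et al., 2020; Jiang et al., 2023), this class is dense in the continuous causal sequence-to-sequence functions on compact domains. By embedding tokens as one-hot vectors and decoding the output via $\arg\max$, density-in-sup-norm transfers to exact representation of discrete token-valued functions on a finite domain: it suffices that the approximating logits place strictly more mass on the target token than on any other token, which is a finite collection of strict inequalities that any sufficiently tight uniform approximation satisfies.

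A more explicit alternative, which makes the T-C structure manifest in the architecture, would be a three-stage construction: (i) a first block that computes the T/C label of each preceding token, which is possible because $\mathcal{F}$ is itself a causal function on a finite discrete domain; (ii) a middle block whose attention is gated by those labels so that, when the next position is classified as template, only preceding template positions contribute, matching the independence in Equation~(\ref{sequ:next_token}); and (iii) a final feed-forward block realizing the two resulting lookup tables extracted from $\mathcal{M}_I$ (template prefix $\mapsto$ template token, full prefix $\mapsto$ content token). The T/C gating step would use the classification computed in (i) to enforce the uni-directed dependency required by the definition of a T-C model.

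The hard part is the interface between the continuous dynamics of a Transformer and the discrete, exact-match nature of the target function. The universal-approximation route hides this in the standard softmax-plus-$\arg\max$ trick, but requires controlling the approximation error uniformly below the minimum logit gap across the finite domain; the explicit construction instead requires arguing that the discrete T/C labels and token identities can be represented exactly in the embedding space and propagated through attention without collision, which forces enough width and depth to disambiguate every distinct prefix that $f$ treats differently. In both approaches the essential point is the same: the T-C constraint is a constraint on the target function, not on the Transformer class, so existence reduces to expressivity of causal Transformers on a finite discrete domain, which is already known.
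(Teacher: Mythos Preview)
Your proposal is correct, and your first route is in fact more direct than the paper's. The paper does not appeal to universal approximation on $\mathcal{M}_I$ as a single map; instead it first \emph{decomposes} $\mathcal{M}_I$ as $g_I\bigl(f_{T,I}(\cdot),\,f_{C,I}(\cdot)\bigr)$, where $f_{T,I}$ depends only on preceding template tokens and $f_{C,I}$ only on preceding content tokens, then invokes the causal UAT (Theorem~\ref{lemma:universal}) three times to realize each of $f_{T,I},f_{C,I},g_I$ by a causal Transformer, and finally stitches the three pieces into a single network by concatenating the heads of $f_T$ and $f_C$ layer-by-layer (with block-diagonal feed-forward layers to keep the two streams independent) and stacking $g$ on top.

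Your observation that the T-C constraints are properties of the \emph{values} of $\mathcal{M}_I$, not of the architecture, is exactly what makes your direct route work and is the cleaner argument for the proposition as stated. What the paper's detour buys is that the resulting Transformer carries the decomposition $(f_T,f_C,g)$ explicitly in its architecture; this ``T-C Transformer'' notation is then reused verbatim in Proposition~\ref{prop:answer_generation_formal} and in the hierarchical extension $(f_{T_1},\dots,f_{T_n},g)$, where one needs to reason about the internal streams separately. Your explicit three-stage alternative (classify, gate, look up) is close in spirit but uses a different factorization: you isolate the classifier $\mathcal{F}$ first and then gate attention, whereas the paper isolates the two information-extraction channels $f_T,f_C$ and lets $g$ absorb both the classification and the combination. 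Either factorization would serve the later development, but be aware that the paper's downstream statements are phrased in terms of $(f_T,f_C,g)$ specifically.
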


\begin{wrapfigure}[16]{r}{0.2\textwidth}
    \vspace{-10pt}
    \centering
    \hspace{-17pt}
    \includegraphics[width=0.15\textwidth]{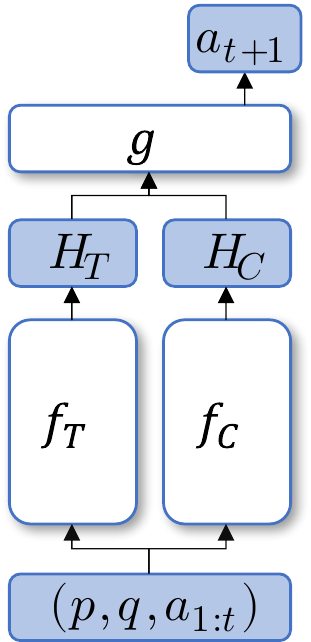}
    \caption{The architecture of the combined Transformer.}
    \label{fig:combine}
\end{wrapfigure}

The framework of our constructive proof is as follows. (1) We can reorganize the generation process of the T-C model $\mathcal{M}_I$: the input tokens are divided into two template tokens and content tokens according to the groundtruth T-C classification $\mathcal{F}$. 
Each group extracts its respective information $H_T$ and $H_C$ by a function $f_T$ and $f_C$, which is then combined by a function $g$ for the final output $a_{t+1}$.
(2) Because of the UAT that we will prove in Appendix~\ref{app:universal},
the information extractor $f_T$, $f_C$ and the final output function $g$ can be all implemented by Transformers. (3) These three Transformers can be combined into a final T-C Transformer. 
Here we provide a diagram showing the architecture of our construction in Figure~\ref{fig:combine}. The detailed constructive proof is as follows:

\begin{proof}
We first rewrite the generation process of the T-C model $\mathcal{M}_I$ as follows:
\begin{equation}
    a_{t+1} = \mathcal{M}_I(\bm p, \bm q, \bm a_{1:t})
    =g_I(f_{T,I}(\bm p, \bm q, \bm a_{1:t}), f_{C,I}(\bm p, \bm q, \bm a_{1:t}))
\end{equation}
where $f_{T,I}$ and $f_{C,I}$ are causal functions that map an input sequence to a sequence with the same length in a certain representation space, and $g_I$ is a causal function to output the next token prediction.
The $f_{T,I}$ is the function to extract the template information independent of any content token. Specifically, it means $f_{T,I}(\bm{p},\bm{q},\bm{a}_{1:t})=f_{T,I}(\bm{p},\bm{q}',\bm{a}'_{1:t})$ if $\mathcal{F}(\bm p, \bm q, \bm{a}_{1:t})=\mathcal{F}(\bm p, \bm q', \bm{a}'_{1:t})$ (T/C alignment), and $a_s=a'_s, \ \forall 1\leq s\leq t:\mathcal{F}(a_s)=T$ (same template). The $f_{C,I}$ is similarly defined as the content representation only depending on the preceding content tokens and being independent of any template token. That is, $f_{C,I}(\bm{p},\bm{q},\bm{a}_{1:t})=f_{C,I}(\bm{p}',\bm{q},\bm{a}'_{1:t})$ if $\mathcal{F}(\bm p, \bm q, \bm{a}_{1:t})=\mathcal{F}(\bm p', \bm q, \bm{a}'_{1:t})$ (T/C alignment), and $a_s=a'_s, \ \forall 1\leq s\leq t:\mathcal{F}(a_s)=C$ (same content). 
To ensure that template token only depends on preceding template tokens, we also have when $\mathcal{F}(a_{t+1})=T$, the value of function $g_{I}$ should only depend on the output of $f_{T,I}$ while being invariant to changes of $f_{C,I}$. 

Considering that the vocabulary is finite, we can set the representation space of $f_{T,I}$ and $f_{C,I}$ as $\mathbb{R}^d$ where for a sufficiently large $d$, the capacity is guaranteed to be adequate. In this situation, the value of $f_{T,I}$ and $f_{C,I}$ are two tensors shaped as $(|\bm p|+|\bm q|+t)\times d$ (denoted as $\bm H_{T,I}$ and $\bm H_{C,I}$), and we can concatenate these two tensors in the second dimension as the input of $g_{I}$. Notice that we can require $f_{T,I}$, $f_{C,I}$ and $g_I$ to be causal functions\footnote{To be more precise, we can construct a function $\hat{g}_I$ that generates a sequence $(\bm p_{2:|\bm p|},\bm q, \bm a_{1:t+1})$, where this function is causal. Then, the $g_I$ takes the last element of the output of $\hat{g}$.} because of the autoregressive generation.
According to Theorem~\ref{lemma:universal}, there \textbf{exist} three causal Transformers that can approximate these function $f_{T,I}$, $f_{C,I}$ and $g_I$ arbitrarily well and we denote them as $f_T$, $f_C$ and $g$.

Finally, we need to show that these three Transformers can be combined into one (through concatenation and stacking). By combining the heads in each layer of $f_T$ and $f_C$ together, the output of the combined layer is the concatenation of the output of $f_T$ and $f_C$. We can keep their independence property by ensuring that the feed-forward layer is block diagonal. As in other theoretical analyses, we ignore the layer norm. After $l=\max(\text{num\_layer}(f_T), \text{num\_layer}(f_C))$ layers, the output of the combined Transformer is the concatenation of $\bm{H}_T$ and $\bm{H_C}$. Then it just needs to add a Transformer $g$ after the combined $f_T$ and $f_C$. As a summary, the constructed Transformers is as shown in Figure~\ref{fig:combine}. Namely, $f_T,f_C$ compose the first few layers of the combined Transformer, their output tensors $\bm{H}_{T},\bm{H}_{C}$ are the hidden representations after these layers, and finally $g$ takes these hidden representations to output the next token.

Until now, we have proven that there \textbf{exists} a Transformer that can generate following the template-content structure. We use $(f_T, f_C, g)$ to represent a T-C Transformer constructed as above. 
\end{proof}

\subsection{Within-task generalization}
\label{ssec:sample-based_template_generation}
Here, we will show that the template-content structure can explain how a pre-trained Transformer gains the ability to generate the template from a training sample, which leads to the task-solving capacity. Because we focus on the generalization, we firstly assume the pre-trained model can remember the training samples.
\begin{definition}[Remember]
    \label{assumpt:well-trained}
    A model $\mathcal{M}$ can \textbf{remember} on a training sample $(\bm{p}, \bm{q}, \bm{a})$, which means given the prompt $\bm{p}$ and question $\bm{q}$, the model can perfectly generate the answer $\bm{a}$ autoregressively.
\end{definition}
This definition requires a pre-trained model to ``remember'' and then reproduce a training example. We believe that such a requirement is not challenging for the prevailing LLMs with a huge amount of parameters, as it does not demand any form of generalization.
Until now, we have demonstrated that (1) the template represents the general process of solving complex tasks in Section~\ref{ssec:tc_structure} and shown in Figure~\ref{fig:framework}; (2) given our modeling of the autoregressive generation task using the T-C structure in Definition~\ref{def:groundtruth_formal}, and our construction of the T-C Transformer in Proposition~\ref{prop:exist_formal}, the T-C Transformer exhibits the ideal behavior that enables template generation independent of specific content. Now, coupling the ideal behavior with the assumption that the Transformer has been trained to remember a training sample to providing the template, it is a natural conclusion that this Transformer can generate the template according to training samples. 

Here we describe the template generation as a partial sequence continuation problem and have the following proposition:

\begin{proposition}[Within-task generalization]
    \label{prop:answer_generation_formal}
    With a pretrained T-C Transformer $(f_T,f_C,g)$, a prompt $\bm{p}$, question $\bm{q}$ and a prefix of a potential answer $\bm{a}_{1:t}$ (empty sequence if $t=0$) as input, assuming that there exists a training sample $(\bm{p},\bm{q}',\bm{a}')$ such that (1) $\mathcal{F}(\bm p, \bm q, \bm{a}_{1:t})=\mathcal{F}(\bm p, \bm q', \bm{a}'_{1:t})$, (2) $a_i=a_i'$ for $1\leq i\leq t:\mathcal{F}(a_i)=T$ and (3) the Transformer can remember the sample, then the Transformer given the sequence $(\bm{p},\bm{q},\bm{a}_{1:t})$ can generate the answer $\bm{a}$ whose template tokens keep the \emph{\textbf{same}} as $\bm{a}'$, i.e., $a_j=a_j'$ for $1\leq j \leq |\bm{a}|: \mathcal{F}(a_j)=T$.
\end{proposition}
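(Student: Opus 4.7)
The plan is to prove the claim by induction on the length $s \geq t$ of the autoregressively extended sequence. I would carry two invariants through the induction: (i) the T/C classifications of $(\bm p, \bm q, \bm a_{1:s})$ and $(\bm p, \bm q', \bm a'_{1:s})$ agree position by position, and (ii) at every position $i \leq s$ with $\mathcal{F}(a_i)=T$ we have $a_i = a'_i$. The base case $s=t$ is precisely the hypothesis of the proposition, and the induction continues until autoregressive decoding emits the end-of-text token.

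For the inductive step I would invoke Proposition~\ref{prop:exist_formal} to identify the Transformer $(f_T, f_C, g)$ with the ideal T-C model $\mathcal{M}_I$ of Definition~\ref{def:groundtruth_formal}, so that equations~(\ref{sequ:next_tc}) and~(\ref{sequ:next_token}) apply to its next-token outputs. By the inductive hypothesis, the two prefixes satisfy both the T/C alignment and the shared-template conditions, so~(\ref{sequ:next_tc}) forces the next generated tokens on the two streams to share a T/C class. Because the Transformer remembers the training sample (Definition~\ref{assumpt:well-trained}), running it on $(\bm p, \bm q', \bm a'_{1:s})$ yields exactly $a'_{s+1}$; hence the newly generated token $a_{s+1}$ from $(\bm p, \bm q, \bm a_{1:s})$ has T/C class $\mathcal{F}(a'_{s+1})$, extending invariant (i). Splitting on this class, if $\mathcal{F}(a'_{s+1})=T$ then~(\ref{sequ:next_token}) yields $a_{s+1}=a'_{s+1}$ and (ii) is preserved; if $\mathcal{F}(a'_{s+1})=C$ then (ii) imposes nothing at position $s+1$ and is vacuously preserved.

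Because $\mathcal{F}$ is a causal sequence-to-sequence function (Definition~\ref{def:causals2s}), appending a token leaves the T/C labels at earlier positions untouched, so the earlier-position parts of invariant (i) are not disturbed. Iterating until the end-of-text token gives $a_j = a'_j$ at every template index $j \leq |\bm a|$, the desired conclusion.

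The main obstacle, to my mind, is being disciplined about which sequence the T-C properties (\ref{sequ:next_tc})--(\ref{sequ:next_token}) are being applied to: the hypothesis of ``remembering'' only tells us what the Transformer outputs on prefixes of the training sequence $(\bm p, \bm q', \bm a')$, while the T-C property is what transports that information over to the new prefix $(\bm p, \bm q, \bm a_{1:s})$. A secondary subtlety is that content tokens may genuinely differ between the two streams at intermediate positions; one must verify that this divergence is harmless for the induction, which is precisely what the causality of $\mathcal{F}$ combined with~(\ref{sequ:next_tc}) guarantees, since neither the T/C alignment at later positions nor the template matching there depends on agreement of earlier content tokens.
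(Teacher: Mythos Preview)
Your proposal is correct and follows essentially the same approach as the paper: induct on the generated prefix length, use the ``remember'' assumption to pin down the training-sample branch, apply~(\ref{sequ:next_tc}) and~(\ref{sequ:next_token}) to transport T/C class and (when template) the token itself to the new branch, and invoke causality of $\mathcal{F}$ to keep the alignment invariant intact. Your write-up is in fact somewhat more careful than the paper's in separating out the two invariants and in flagging why divergent content tokens at intermediate positions do not disrupt the induction.
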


\begin{proof}
    Given the prefix $\bm a_{1:t}$ and $\bm a_{1:t}'$, we considering the generation of the next token $a_{t+1}$. Because the Transformer can remember the training sample $\bm p,\bm q', \bm a'$, according to the Definition~\ref{assumpt:well-trained}, we have 
    \begin{equation}
        a'_{t+1}=\mathcal{M}(\bm p,\bm q', \bm a'_{1:t}),
    \end{equation}
    We denote $\mathcal{M}(\bm p, \bm q, \bm a_{1:t})$ as $a_{t+1}$. According to the Equation~(\ref{sequ:next_tc}), we have $\mathcal{F}(a_{t+1})=\mathcal{F}(a_{t+1}')$. If the generated token is a content token, it has satisfied the requirement of the proposition, because the proposition only claims the same template tokens. If the generated token is a template token, according to the Equation~(\ref{sequ:next_token}), we have
    \begin{equation}
        a_{t+1}=\mathcal{M}(\bm p,\bm q, \bm a_{1:t}) = \mathcal{M}(\bm p,\bm q', \bm a'_{1:t})=a_{t+1}',
    \end{equation}
    which satisfies the claim. Finally, we need to check whether the conditions of the proposition have been satisfied for the new sequence concatenated with the generated token $a_{t+1}$. If the conditions are still satisfied, we can finish the proof recursively. For the first condition (T/C alignment), we have $\mathcal{F}(a_{t+1})=\mathcal{F}(a_{t+1}')$ and $\mathcal{F}(\bm p, \bm q, \bm a_{1:t})=\mathcal{F}(\bm p, \bm q', \bm a_{1:t}')$. Because the function $\mathcal{F}$ is a causal function and the newly appended token does not affect the preceding value, we have $\mathcal{F}(\bm p, \bm q, \bm a_{1:t+1})=\mathcal{F}(\bm p, \bm q', \bm a_{1:t+1}')$. The second condition (the same template tokens) is obviously satisfied.
\end{proof}

This proposition conceptually demonstrates that, owing to our template-content modeling of natural language and the construction of the corresponding T-C Transformer, the Transformer can generate an answer sequence with the ``correct'' template as long as there is a training sample with the same template (but content may be different), thereby solving complex problems.

The only condition of the proposition is the existence of the training sample.
The template is considered the \textbf{invariant part} for potentially infinite questions sharing the same task. The template space is much \textbf{smaller} than the total answer space, which is why we can assume the existence of a training sample with such a template that facilitates the LLM to invoke the same template when it sees a new question. This might explain the excellent \textbf{generalization} and \textbf{reasoning} abilities of modern LLMs.
There are still some practical details such as our \textit{token-wise alignment}, \textit{prompt format}, \textit{training process}, and \textit{sampling strategy}, and we discuss them in Appendix~\ref{app:discussion}.

\section{The universal approximation theorem}
we will extend the universal approximation theorem (UAT) to the causal Transformer. The UAT proved in \citet{yun2020transformers} claims Transformers can approximate any continuous function (defined on a compact support set) with arbitrary precision.

\begin{theorem}[Universal approximation theorem, UAT]
    For any continuous distribution $\mathcal{P}$ defined on a compact support $\mathcal{D}\subseteq\mathbb{R}^{n\times d}$, any $1\leq p<+\infty$, $\varepsilon>0$, the context window $n$ and the target continuous sequence-to-sequence function $f:\mathcal{D}\to\mathbb{R}^{n\times d}$, there exist a Transformer $g$ with $l$ layers, such that
    \begin{equation}
        P_{\bm{X}\sim\mathcal{P}}\left[\left\| f(\bm{X}) - g(\bm{X}) \right\|_p<\varepsilon\right] \geq 1-\varepsilon.
    \end{equation}
    where the norm is entry-wise $l_p$ norm.\footnote{Here, we have slightly modified the form of the original theorem, such as the type of convergence.}
\label{thm:uat}
\end{theorem}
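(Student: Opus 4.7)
The plan is to reduce the claim to the original UAT of Yun et al.\ and then upgrade the mode of convergence from $L^p$ to in-probability via a Markov-type inequality.

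First, I would invoke the original (Lebesgue) UAT to obtain, for any prescribed $\delta>0$, a Transformer $g_\delta$ satisfying
\begin{equation}
    \left(\int_{\mathcal{D}} \|f(\bm{X})-g_\delta(\bm{X})\|_p^p\,\mathrm{d}\bm{X}\right)^{1/p} < \delta.
\end{equation}
Since the surrounding text aims at the causal setting, I would also check that Yun et al.'s construction (piecewise-constant approximation of $f$, a contextual mapping implemented by attention, and memorization by the feed-forward blocks) survives the addition of the lower-triangular causal mask whenever the target $f$ itself is causal; each attention primitive used is a ``look-backward'' operation and therefore compatible with the mask.

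Next, I would transfer this Lebesgue bound to an $L^p(\mathcal{P})$ bound. Because $\mathcal{P}$ is a continuous distribution on the compact set $\mathcal{D}$, it admits a density $\rho$; under the standard assumption that $\rho$ is essentially bounded by some $M<\infty$ on $\mathcal{D}$, one gets
\begin{equation}
    \mathbb{E}_{\bm{X}\sim\mathcal{P}}\bigl[\|f(\bm{X})-g_\delta(\bm{X})\|_p^p\bigr] \leq M \int_{\mathcal{D}} \|f(\bm{X})-g_\delta(\bm{X})\|_p^p\,\mathrm{d}\bm{X} < M\delta^p.
\end{equation}
If one prefers to avoid any density assumption, an alternative is to rerun the Yun et al.\ discretization directly against $\mathcal{P}$, since their only measure-theoretic ingredient is the density of piecewise-constant functions in $L^p(\mu)$ for a finite Borel measure $\mu$, which holds for $\mu=\mathcal{P}$ verbatim.

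Finally, I would apply Markov's inequality to the nonnegative random variable $\|f(\bm{X})-g_\delta(\bm{X})\|_p^p$:
\begin{equation}
    P_{\bm{X}\sim\mathcal{P}}\bigl[\|f(\bm{X})-g_\delta(\bm{X})\|_p \geq \varepsilon\bigr] \leq \frac{\mathbb{E}_{\bm{X}\sim\mathcal{P}}\bigl[\|f(\bm{X})-g_\delta(\bm{X})\|_p^p\bigr]}{\varepsilon^p} \leq \frac{M\delta^p}{\varepsilon^p},
\end{equation}
and choose $\delta$ so that $M\delta^p \leq \varepsilon^{p+1}$, e.g.\ $\delta := (\varepsilon^{p+1}/M)^{1/p}$. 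Setting $g:=g_\delta$ then yields the desired bound. The main obstacle I anticipate is the measure-change step: if $\mathcal{P}$ is only assumed continuous with a possibly unbounded density, the simple comparison $\mathbb{E}_{\mathcal{P}}[\,\cdot\,]\leq M\cdot\mathrm{Leb}(\cdot)$ breaks, and one is forced to redo the Yun et al.\ construction internally with respect to $\mathcal{P}$, re-checking that their partition of $\mathcal{D}$ and quantization of the attention outputs still control the approximation error when the reference measure is $\mathcal{P}$ rather than Lebesgue.
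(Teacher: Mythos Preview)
The paper does not supply its own proof of this theorem: it is quoted as a mild reformulation of the result of Yun et al.\ (2020), with only a footnote noting the change in convergence type, and then used as a black box. Your reduction---invoke the original $L^p$ UAT, change measure, and apply Markov---is a correct and natural way to justify that reformulation. Your diagnosis of the only real obstacle (Lebesgue $L^p$ versus $L^p(\mathcal{P})$) is accurate; if ``continuous distribution'' is read as ``distribution with a continuous density on the compact set $\mathcal{D}$'', then that density is automatically bounded and your first route goes through without needing the internal rerun of the construction.

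Your aside about the causal setting, however, is both out of scope for this statement and substantively wrong. The present theorem concerns ordinary Transformers; the causal extension is a separate result (Theorem~\ref{lemma:universal}) that the paper \emph{does} prove. Your claim that ``each attention primitive used is a look-backward operation and therefore compatible with the mask'' is false: the contextual-mapping step in Yun et al.\ is genuinely bidirectional, and the paper has to redesign it from scratch. Concretely, the paper appends two extra embedding dimensions carrying a trigonometric position code $(\cos t\theta_n,\sin t\theta_n)$, sets the query projection to rotate by $-\theta_n$ so that each hardmax head deterministically attends to position $t-1$, and then stacks $n$ such layers to accumulate the quantity $l_t^{(n)}=\sum_{i=0}^n \delta^{-2id}\sum_{k=0}^i\binom{i}{k}(-1)^k l_{t-k}$; injectivity of this map on causal prefixes is then argued via a base-$\delta^{-2d}$ magnitude-separation argument. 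None of this is inherited from the unmasked construction, so the causal case is not the free corollary you suggest.
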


This theorem underscores the expressive capacity of Transformers and can serve as the foundation for explaining their reasoning mechanisms. 
However, to the best of our knowledge, there is currently no work that has extended this theorem to causal Transformers, which are the prevalent structures used in large-scale models today.
Therefore, we first provide modifications to this theorem and then utilize it as a tool for studying the autoregressive model and template-content structure.

\begin{theorem}[Universal approximation theorem of causal Transformer]
    For any causal sequence-to-sequence function $f$ (see Definition~\ref{def:causals2s}) and the class of causal Transformer, which means all the attention is masked except for the preceding tokens, the convergence in Theorem~\ref{thm:uat} still holds.
\label{lemma:universal}
\end{theorem}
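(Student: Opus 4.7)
My plan is to revisit the constructive proof of Theorem~\ref{thm:uat} and show that each of its stages can be implemented under a causal attention mask without loss of approximation quality. That proof decomposes the approximating Transformer into three stages: (i) a token-wise feed-forward layer that quantizes each input column onto a finite grid; (ii) a stack of attention layers implementing a \emph{contextual mapping} which assigns a unique scalar identifier at each position as a function of the quantized inputs; and (iii) a final token-wise feed-forward layer that maps each identifier to the approximate target output. Stages (i) and (iii) act position-wise and are immediately compatible with a causal mask, so only stage (ii) requires modification.

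The key observation is that when $f$ is causal in the sense of Definition~\ref{def:causals2s}, the target $f(\bm{X})_i$ depends only on $\bm{X}_{1:i}$, so the contextual mapping at position $i$ only needs to distinguish different quantized prefixes of length $i$, not different full sequences. This is strictly weaker than what the non-causal proof has to achieve, and it is precisely what causal attention supports, since at position $i$ the mask permits attention to exactly the tokens $\bm{X}_{1:i}$ that determine the required output. I would therefore reuse the same attention building blocks (selective-shift-style heads combined with a softmax) as in the original proof, but with every scan now terminating at the current position rather than at the end of the sequence, so that the attention output at position $i$ is a function of $\bm{X}_{1:i}$ alone. A first sanity check is that this still gives a causal sequence-to-sequence function in the sense of Definition~\ref{def:causals2s}.

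The main obstacle I anticipate is the bookkeeping required to verify that these prefix-restricted identifiers remain simultaneously injective on quantized prefixes of every length $i \leq n$ inside the compact support $\mathcal{D}$, so that a single feed-forward layer at stage (iii) can consistently recover the approximate value of $f(\bm{X})_i$ from the identifier produced at position $i$. I expect this to reduce to an injectivity statement for a base-$b$ positional encoding, where $b$ is the number of quantization bins, whose argument parallels the injectivity step in the non-causal proof but is carried out one prefix length at a time and then unioned over $i$. Once injectivity is in hand, the $L^p$-type approximation bound and the probabilistic conclusion stated in Theorem~\ref{lemma:universal} follow by the same continuity and compactness arguments as in Theorem~\ref{thm:uat}, now applied position-by-position and aggregated by a union bound over the $n$ positions and an adjustment of the tolerance $\varepsilon$ by a factor of $n$.
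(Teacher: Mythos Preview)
Your proposal is correct at the strategic level and aligns with the paper: both recognize that stages (i) and (iii) of Yun et al.'s construction are already position-wise and hence causal, so only the contextual-mapping stage must be reworked, and both reduce that step to producing a prefix-injective scalar identifier at every position. The concrete constructions differ, however. You propose to truncate the selective-shift scans at the current position and argue injectivity via a base-$b$ positional encoding. The paper instead appends two extra hidden dimensions carrying a sinusoidal position embedding $(\cos(t\theta_n),\sin(t\theta_n))$ with $\theta_n=2\pi/n$, and sets $W_Q$ to rotate by $-\theta_n$ so that a hardmax head at position $t$ attends exactly to position $t-1$; combining two such heads outputs $\delta^{-2d}(l_t-l_{t-1})$, and after $n$ stacked layers one obtains $l_t^{(n)}=\sum_i \delta^{-2id}\sum_k \binom{i}{k}(-1)^k l_{t-k}$, whose injectivity on $\bm L_{1:t}$ is proved by a magnitude-separation argument in powers of $\delta^{-2d}$. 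Your base-$b$ idea is close in spirit and arguably cleaner, but you would still need to exhibit a concrete causal attention mechanism that realizes the required position-dependent weights; this is exactly the work the paper's rotation trick does, and it is not obvious that simply truncating the original selective-shift heads achieves it without such an auxiliary device. Two minor points: the construction lives in Yun et al.'s ``modified'' Transformer with hardmax (softmax is recovered only in the last approximation step), so your parenthetical ``combined with a softmax'' should read hardmax; and since the conclusion of Theorem~\ref{thm:uat} bounds the full entry-wise $\ell_p$ norm $\|f(\bm X)-g(\bm X)\|_p$, no union bound over positions or $\varepsilon/n$ adjustment is needed.
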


\label{app:universal}
\begin{proof}
    Here, we follow the proof in \citet{yun2020transformers} (Theorem~3 in that paper). Notice that the only difference between the causal Transformer and the original Transformers is the mask in the self-attention. So we only need to modify the parts of the proof that related to self-attention.

    The proof in \citet{yun2020transformers} can be divided into three steps:
    \begin{enumerate}
        \item Any continuous function defined on a compact support can be approximated by a piece-wise constant function on the $\delta$-grid $\mathbb{G}_\delta=\{0,\delta,\dots,1-\delta\}^{n\times d}$
        \item Any piece-wise constant function can be approximated by a \textit{modified} Transformers. Here, \textit{modified} means that the softmax function in the self-attention is replaced by a hardmax function and the activation function can be any piece-wise linear function with at most three pieces.
        \item The modified Transformers can be approximated by the original Transformers.
    \end{enumerate}
    To modify the proof to the causal setting, the first step can be applied directly, as it is not contingent upon the specific structure of the Transformers. Similarly, the third step, which is proofed by approximating the hardmax through the softmax as the temperature approaches infinity, can be also applied directly.
    The only difference is the second step, the key part of the proof. 
    The basic idea of the second step is several Transformer layers can be used to learn each input vector with its position and context to a unique representation like a \textit{hash} function. Then, with the distinct representation and the universal approximating ability of feed-forward networks, expressive power can be achieved by the following feed-forward functions. 
    Specifically, the part of the proof consists of three-step construction:
    \begin{enumerate}
        \item \textbf{Discretization}: A series of feed-forward layers in the modified Transformers network can quantize the continuous input $\bm{X}\in\mathbb{R}^{n\times d}$ into an element $\bm{L}$ on the extended grid $\mathbb{G}^+_\delta=\{-\delta^{-nd},0,\delta,\dots,1-\delta\}^{n\times d}$. This step is to prepare the unique representation for the inputs.
        \item \textbf{Unique Representation}: a series of self-attention layers can learn the \textit{unique} representation $q(\bm{L}_i;\bm{L})$, where $\bm{L}_{i}$ is the input vector and $\bm{L}$ is the context. The representation is the same only if the input vector and the context are both the same.
        \item \textbf{Value Mapping}: a series of feed-forward layers can map the unique representation to the desired output value.
    \end{enumerate}
    The discretization and value mapping only involve the feed-forward layers and the proof can be applied directly. So we only need to modify the unique representation part. Specifically, modify the global context $\bm{L}$ to causal context $\bm{L}_{1:i}$ for the input $\bm{L}_i$.

    As the claim in Appendix~C in \citep{yun2020transformers}, the proof with position embedding only needs Category~1 in Appendix B.5.1. So we only need to modify this part to fit the causal setting. Here we claim:
    \begin{enumerate}
        \item With two additional dimensions in the hidden states to learn a position embedding, one self-attention layer with two (hardmax) heads can achieve the mapping from the \textit{column id}\footnote{The column id is a unique representation of the input vector. See Appendix~B.5 of the original paper.} $l_k^{(t)}$ into the difference between the current position and the last position $\delta^{-2d}(l_k^{(t)}-l_{k-1}^{(t)})$ 
        \item With $t$ stacks of such layers, the output at $t$-th position is the bijection mapping from the causal context $\bm{L}_{1:t}$.
    \end{enumerate}
    The first claim: for the hardmax attention $\Attn_h(\bm{X})=\sigma_H(\bm{X}\bm{W}_Q(\bm{X}\bm{W}_K)^T)(\bm{W}_V\bm{X})$. 
    
    First, we can use additional two dimensions in the hidden states to store the position embedding $(\cos(t\theta_n), \sin(t\theta_n))$ where $\theta_n=\frac{2\pi}{n}$. With the residual connection between each block, we just need to ensure the output of the attention blocks and feed-forward blocks in these dimensions are all zero so the position encoding will not change through different layers. We denote the extended input as $\bm{L}^+\in\mathbb{G}_\delta^+\times\mathbb{R}^{n\times 2}$.  

    Then let $\bm{W}_Q\in\mathbb{R}^{(d+2)\times2}=(\bm{0}, \bm{R}(-\theta_n))$ where $\bm{R}$ is the rotation matrix in the 2-dimension plain and $\bm{W}_K\in\mathbb{R}^{(d+2)\times2}=(\bm{0}, \bm{I}_2)$. So the $\bm{q}_t =(\cos((t-1)\theta_n),\sin((t-1)\theta_n))$ and $\bm{k}_t = (\cos(t\theta_n), \sin(t\theta_n))$ so that the hardmax at position $t$ always return the index $t-1$ (with additionally defining $\bm{v}_0=\bm{v}_1$). As for $\bm{W}_v$, we just use the construction in the original proof, which means $\bm{W}_v\in\mathbb{R}^{(d+2)\times 1}=(1,\delta^{-1},\dots,\delta^{-d+1},0,0)$. This head returns the \textit{column index} $l_{t-1}$ at the position $t-1$.
    Let another head returns $l_t$ and the $W_h=\delta^{-2d}(1,-1)$, so that after one self-attention layer, the value $\bm{v}_{t}$ at position $t$ is $l_t+\delta^{-2d}(l_t-l_{t-1})$.
    Then we repeat the layer $n$ times, we can easily prove that the value $\bm{v}_t$ at position $t$ is 
    \begin{equation}
    l_t^{(n)}=\sum_{i=0}^n \delta^{-2id}\sum_{k=0}^i\left(\binom{i}{k}(-1)^{k}l_{t-k}\right),
    \end{equation}
    where we use the convention that $l_t=l_1$ if $t\leq0$.
    Now let us show why the value $l_t^{(n)}$ is the bijection mapping from the causal context $\bm{L}_{1:t}$. Note that $|\sum_{k=0}^i(\binom{i}{k}(-1)^{k}l_{t-k})|\leq2^i(\delta^{-d+1}-\delta)\leq(\delta^{-d}-1)$ if we set $\delta\leq1/2^n$. So if we have $l_t^{(n)}=l_t^{(n)'}$, denote $c_i=\sum_{k=0}^i\left(\binom{i}{k}(-1)^{k}l_{t-k}\right)$, then we must have
    \begin{equation}
        \sum_{i=0}^n \delta^{-2id}(c_i-c_i')=0,\quad\text{where }-(\delta^{-d}-1)\leq c_i \leq (\delta^{-d}-1),
        \label{equ:sum_is_zero}
    \end{equation}
    and
    \begin{equation}
        |c_i-c_i'| \geq \delta\text{ or }c_i=c_i'.
    \end{equation}
    because the each $l_i$ differ at least $\delta$ after the discretization and with the special construction of $\bm{W}_v$.

    If $c_n\not= c_n'$, we have
    \begin{equation}
    \begin{aligned}
    &|\sum_{i=0}^{n-1} \delta^{-2id}(c_i-c_i')| \leq \sum_{i=0}^{n-1}2\delta^{-2id}(\delta^{-d}-1) \\ 
    \leq& 2(\delta^{-2nd}-1)/(\delta^{-d}+1)<\delta^{-2nd+1}\leq \delta^{-2nd}|c_i-c_i'|,
    \end{aligned}
    \end{equation}
    but at the same time, 
    \begin{equation}
        |\sum_{i=0}^{n-1} \delta^{-2id}(c_i-c_i')| = |\delta^{-2nd}(c_i-c_i')|=\delta^{-2nd}|c_i-c_i'|,
    \end{equation}
    because of the sum in Equation~(\ref{equ:sum_is_zero}) is zero, there is a conflict. So we must have $c_n=c_n'$. And similarly, we can recursively prove each $c_i$ and $c_i'$ are equal. 
    Note that $c_0=c_0'$ implies $l_t=l_t'$, and $c_1=c_1'$ implies $(l_t-l_{t-1})=(l_t'-l_{t-1}')$ so that $l_{t-1}=l_{t-1}'$. We can recursively prove that $l_i=l_i'$ for any $i=1,\dots,t$. Combining with the bijection of $\bm{L}_{i}$ to $l_t$ (proofed in the original paper), it means the mapping from $\bm{L}_{1:t}$ to $l^{(n)}_t$ is a bijection. Because our results are still bounded, the properties 6.3 and 6.4 can be satisfied with slight modification. This finishes the proof.
\end{proof}

\section{Hierarchical generation}
\label{app:hierarchical_generation}
This section is about the hierarchical T-C structure where we mainly intuitively introduce some definitions needed for the extension, some problems faced, and its motivations in more detail than in main paper. Readers are expected to read this section first before going into more detailed definitions. The formal definitions will be included in Appendix~\ref{app:proof}.

\subsection{Hierarchical T-C structure}
We first generalize the definition of T-C structure (Definition~\ref{def:template_and_content}) into hierarchical setting. 
\begin{definition}[Hierarchical template and content]
    \label{def:hierachical_template_and_content}
    Function $\mathcal{F}: \mathbb{N}\times\mathcal{S}\to\{T_1,T_2,\dots,T_n\}$ is called a \textbf{hierarchical T/C classification function}, where $\mathcal{S}\subseteq\mathcal{T}^*$. The function takes a token sequence $\bm{a}$ and an index $i$ as input and gives the classification of the indexed token, denoted as $\mathcal{F}(i;\bm{a})$, abbreviated as $\mathcal{F}(a_i)$. We also use $\mathcal{F}(\bm{a})$ to denote the T/C sequence of the whole \textbf{sequence}. For each $j\in\{1,2,\dots,n\}$, we denote $T_{< j}$ (or $T_{\leq j}$) as the set $\{T_1, T_2, \dots, T_{j-1}\}$ (or $\{T_1, T_2, \dots, T_j\}$) and $T_{> j}$ as $\{T_{J+1}, T_{j+2}, \dots, T_n\}$ similarly.
\end{definition}
Similar with the non-hierarchical setting, we also assume the hierarchical T/C classification functions are causal functions. We have explain the intuition of the hierarchical T-C structure in Section~\ref{ssec:hierarchical_tc_task_composition} as the uni-directed dependency, where the $j$-th level template could be seen as the content of $T_{< j}$ and the template of $T_{>j}$. Therefore, we require that the generation of $j$-th token depends on only $T_{\leq j}$ tokens. This behavior is similar to the Definition~\ref{def:groundtruth_formal} in non-hierarchical setting, where $C$ depends on $T$ uni-directedly. The formal generalization of the definition could be found in Appendix~\ref{app:assume_hierarchy_independent}.

\subsection{How can we combine different samples into one sequence?}
\label{app:conditions}
As we shown by some examples in Section~\ref{ssec:hierarchical_tc_task_composition}, generalizing the template-content structure to the hierarchical version can enable modeling complex tasks that may involve multiple levels of sub-tasks. However, the answer space also grows exponentially. Looking for a single training sample containing all levels of template guidance may not be realistic for the amount of data required, even for an Internet-scale training set. Fortunately, in this section, we show that it is feasible to combine different-level templates from different training samples. That is, we can learn a combinatorially complex hierarchical template from different training samples, each providing only a certain template, leading to also exponentially increased combinatorial power in the answer generation, which explains the generalization ability of our model.

To formally incorporate the ``combination of different samples'' into our T-C structure,
we need to describe under what conditions these samples can be combined together to form a new sequence.
We first define \textit{label}. In Figure~\ref{fig:framework}, we have already used some blue symbols such as \small\texttt{<obj1>}\normalsize, \small\texttt{<obj2>}\normalsize and \small\texttt{<value1>}\normalsize to denote the \textbf{labels} for the content tokens, which then transforms into concrete tokens such as ``\small\texttt{corrects}\normalsize'', ``\small\texttt{wrongs}\normalsize'' and ``\small\texttt{35}\normalsize''.
We can think of a label as \textbf{sufficient} and \textbf{necessary} information from the template to generate the corresponding content, which means that any modification to the template without altering the label will not influence the generation of the content. For example, ``\small\texttt{based on the formula <equ>}\normalsize'' and ``\small\texttt{according to the equation <equ>}\normalsize'' are two template-content structures with different templates (``based on the formula'' vs. ``according to the equation'') but the same label (``<equ>''). This label will generate exactly the same content (concrete equations) with the same preceding content information (the same arithmetic problem) for these two different templates.
The formal definition is shown in Appendix~\ref{app:formal_definition_of_labal}.
If $\bm a'$ is in the label set of $\bm a$, we say the two sequences $\bm a$ and $\bm a'$ have \textit{label consistency}. Intuitively, it means we can merge the template part of $a$ and the content part of $a'$ together to make a new sentence. And if these two sentences can be generated by an (ideal) T-C model separately, this combined sentence can be also generated by the model.

The concept of \textit{label consistency} can be extended to $n$ samples and used to explain how the combination of $n$ samples can yield the hierarchical templates. 
Specifically, we have $n$ samples $\bm a_1, \dots, \bm a_n$ with the aligned $n$-level T/C classification and want to merge them into one sequence by taking the $k$-th level tokens from the $k$-th sentences. When we have merged $T_{< k}$ levels from their respective samples, we can combine the $k$-th sample if the label of the $k$-th sample at the $k$-th level matches the combined sequence. We say these $n$ samples have \textit{label consistency} if, for any $1\leq k\leq n$, the label of $k$-th sample at the $k$-th level matches the combined sequence. In this situation, we denote the combined sequence as $\hat{\bm a}$. 
Similar to the case of two sentences, this property ensures that as long as each sentence can be generated by a T-C model, the combined sentence can also be generated.
The formal definition is shown in Appendix~\ref{app:formal_definition_of_labal}.

\subsection{The generalization power of the hierarchical template-content modeling}
\label{app:hierarchical_generation_subsection}
Above, the ``\textit{label}'' and ``\textit{label consistency}'' describe the conditions that several samples could be composed together into a new answer sequence.
To achieve the combinatorial generation ability, another issue that needs to be addressed is \textbf{content generation}.
Tokens can simultaneously serve as the content for lower-level tokens and the template for higher-level ones. The generation of tokens could either follow a content-like \textit{pointing} approach or a template-like \textit{continuing writing} strategy.
For example in Figure~\ref{fig:hierarchy_example}, the token \texttt{54} and \texttt{28} should be produced by pointing whereas the resultant subsection \texttt{26} is likely learned from a training sample containing the same calculation. 
As we mentioned in Section~\ref{ssec:t-c_make_possbile}, we focus on the template generating learned from training samples and assume the models have the content-generating ability. 
For the sake of brevity within our framework, we introduce the concept of \textit{virtual training samples} to consolidate these two abilities. Provided a sequence can be autoregressively generated, meeting the criteria in Definition~\ref{assumpt:well-trained}, we will treat it as a training sample.
The sequence's actual presence in the training sample set, or its status as a virtual training sample capable of being generated because of assumed generalization ability (for example, generating content disparate from actual training samples), is inconsequential. 
This approach allows us to uniformly model answer generation within the sample-based continuous generation ability.

With our discussion of the condition that $n$ samples can be combined to provide corresponding-level templates, i.e., the label consistency, and the assumption about the content generation, we can now proceed to provide a proposition similar to Proposition~\ref{prop:answer_generation_formal} in the hierarchical structure.

\begin{proposition}[hierarchical answer generation, informal]
    Given $k$ samples with label-consistency, which a T-C Transformer can remember them,
    this model can generate the combined answer $\hat{\bm a}$ with the same k-level tokens as the sample $\bm a^{(k)}$.
    \label{prop:hierarchy}
\end{proposition}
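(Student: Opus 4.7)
The plan is to carry out an induction on the token position $t$, paralleling the proof of Proposition~\ref{prop:answer_generation_formal} (within-task generalization) but tracking the level of each token. Specifically, I would show by strong induction that for every $t$, running the T-C Transformer autoregressively on the supplied prompt and question produces exactly the prefix $\hat{\bm a}_{1:t}$ of the combined answer. The base case ($t=0$) is vacuous. For the inductive step at position $t+1$, I first identify the hierarchical level $k$ assigned to this position in $\hat{\bm a}$; by the construction of $\hat{\bm a}$ the token at this position is, by definition, $\bm a^{(k)}_{t+1}$, so the task reduces to showing that the Transformer's next-token output on input $(\bm p,\bm q,\hat{\bm a}_{1:t})$ equals the Transformer's next-token output on input $(\bm p^{(k)},\bm q^{(k)},\bm a^{(k)}_{1:t})$, which equals $\bm a^{(k)}_{t+1}$ because the model remembers $\bm a^{(k)}$ in the sense of Definition~\ref{assumpt:well-trained}.

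The key reduction is to lift the uni-directed dependence of Definition~\ref{def:groundtruth_formal} to the hierarchical setting: for a level-$k$ token to be generated, only preceding tokens of levels $\leq k$ matter, and among those, the combined sequence $\hat{\bm a}_{1:t}$ and the sample $\bm a^{(k)}_{1:t}$ must agree in the appropriate sense. Here is where label consistency does the real work. By assumption, at each level $j<k$ the labels that $\bm a^{(k)}$'s level-$j$ template assigns to its content slots agree with the labels produced by merging the higher-priority samples $\bm a^{(1)},\dots,\bm a^{(k-1)}$. Since a label is defined (Appendix~\ref{app:formal_definition_of_labal}) as the sufficient and necessary template information for content generation, this agreement implies that replacing $\bm a^{(k)}$'s own level-$<k$ tokens by those inherited from $\hat{\bm a}_{1:t}$ preserves the generation of the level-$k$ token — i.e., the hierarchical analogue of equations~(\ref{sequ:next_tc}) and~(\ref{sequ:next_token}) applies to the pair $(\hat{\bm a}_{1:t},\bm a^{(k)}_{1:t})$.

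To close the induction I would split into two cases. If $\bm a^{(k)}_{t+1}$ is a level-$k$ \emph{template} token relative to the finer hierarchy (i.e., it is carried by the sample $\bm a^{(k)}$ rather than filled from below), then the hierarchical analogue of~(\ref{sequ:next_token}) directly forces equality of the next token, and I invoke the remembering assumption on $\bm a^{(k)}$. If it is content to be filled by a still-lower-level sample, I appeal to the virtual training sample convention of Section~\ref{app:hierarchical_generation_subsection}: the pointing/content-generation capacity is absorbed into the remembering assumption applied to a virtual sample whose generation is justified by the same induction on the finer hierarchy. Finally, I verify that after appending the newly generated token, the label-consistency conditions are inherited at position $t+1$ — this is a direct consequence of the causality of $\mathcal{F}$ and the fact that the appended token, being level $k$, contributes only to levels $\geq k$ in subsequent comparisons.

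The main obstacle I anticipate is the bookkeeping in the inductive step rather than any genuinely new idea: one has to show precisely how label consistency across all $k$ samples implies the pairwise T/C alignment plus same-template condition required by the hierarchical T-C structure when comparing $\hat{\bm a}_{1:t}$ to $\bm a^{(k)}_{1:t}$. This amounts to proving a simultaneous induction on both the position $t$ and the level $k$, because the justification for level-$k$ generation rests on having already established the correct level-$<k$ prefix, which in turn required label consistency at those lower levels. Organizing this nested induction cleanly, and making precise in the hierarchical definition that ``matching templates'' should be read as ``matching labels at the relevant level,'' is where the bulk of the formal care is needed; once set up, each individual step is essentially a reapplication of the argument behind Proposition~\ref{prop:answer_generation_formal}.
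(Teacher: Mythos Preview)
Your proposal is correct and takes essentially the same approach as the paper: an induction on the token position, where at each step you identify the level $k$ of the next token, invoke label consistency to transfer the generation from $\hat{\bm a}_{1:t}$ to $\bm a^{(k)}_{1:t}$, and then apply the remembering assumption on $\bm a^{(k)}$. The paper's version is a bit leaner than what you outline, though: label consistency (Definition~\ref{def:label_consistency}) directly asserts $\hat{\bm a}_{1:t}\in\mathcal{L}_k(\bm a^{(k)}_{1:t})$ for every $t$ and the relevant $k$, so no nested induction on the level, no template/content case split, and no separate verification that the conditions are inherited is needed---a single induction on position with the hypothesis $\bm a_{1:t}=\hat{\bm a}_{1:t}$ suffices.
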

The formal description and the proof will be shown in Appendix~\ref{app:proof_of_hierarchy}.
The proposition demonstrates the model's capability to combine information from different training samples $\bm{a}^{(i)}$. This capacity leads to exponentially increased combinatorial power in the answer generation as well as the generalization ability.

\subsection{Sparse dependence between levels}
We still need some additional explanations here to further simplify the dependencies between different levels. Notice that it is not necessary that the $k$-level tokens $T_k$ entirely depend on all the lower-level tokens $T_{\leq k-1}$. It means that though the $T_4$, for example, is the sub-content of $T_3, T_2$ and $T_1$ and dependent on them theoretically, the factual dependence could be more sparse than it. A small number of levels may be sufficient to provide enough information to determine the generation. 

\begin{figure*}
\centering
\begin{subfigure}[b]{0.6\linewidth}
    \centering
    \includegraphics[width=1\linewidth]{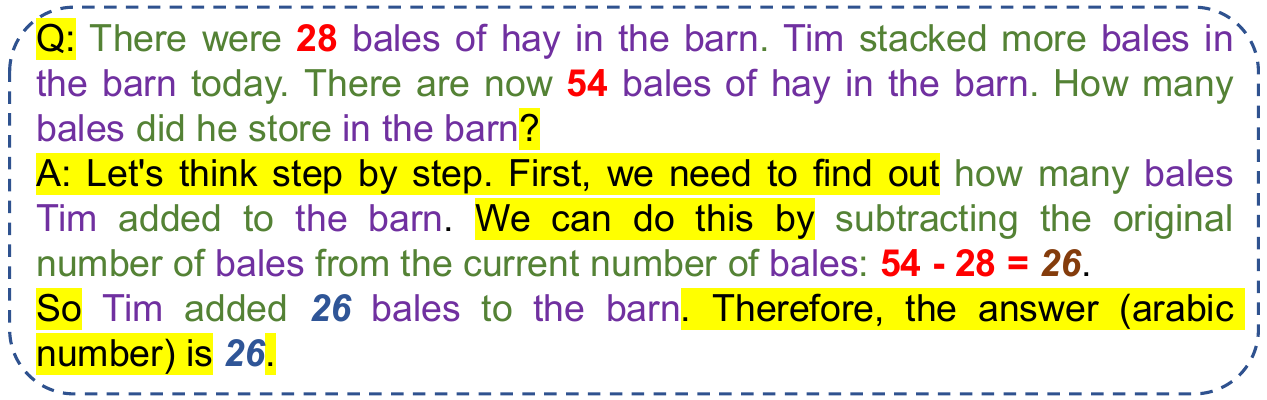}
    \caption{}
\end{subfigure}
\hspace{5pt}
\begin{subfigure}[b]{0.25\linewidth}
\centering
    \includegraphics[width=1\linewidth]{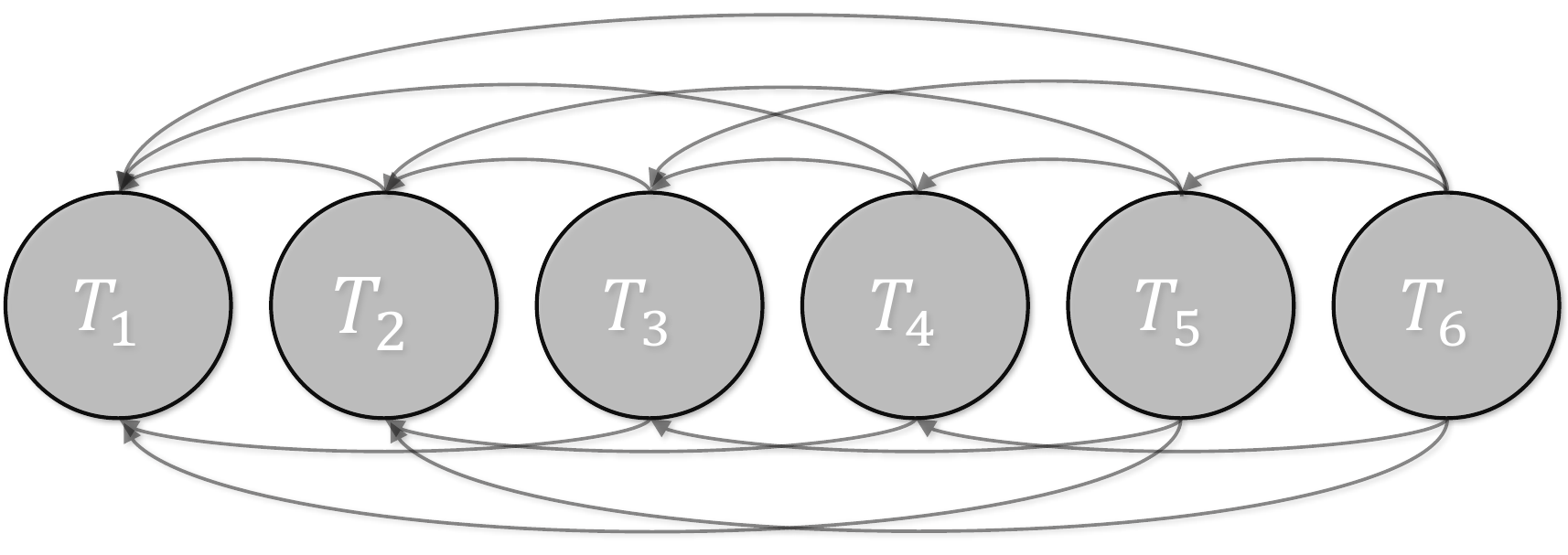}
    \caption{}
    \label{sfig:full_diag}
    \includegraphics[width=1\linewidth]{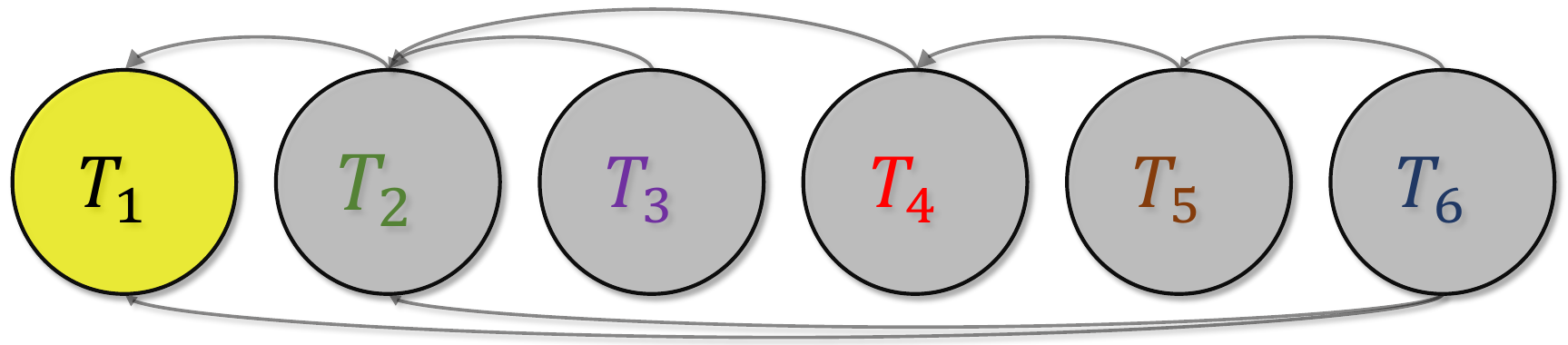}
    \caption{}
    \label{sfig:sparse_diag}
\end{subfigure}
\vspace{0pt}
\caption{Sparsity of the dependency matrix. (a) The sequence is divided into 6 levels: \sethlcolor{newyellow}\hl{$\bm T_1$}, \textcolor{mycolor1}{$\bm T_2$}, \textcolor{mycolor2}{$\bm T_3$}, \textcolor{Red}{$\bm T_4$}, \textcolor{mycolor3}{$\bm T_5$}, \textcolor{mycolor4}{$\bm T_6$}. (b) The dependency structure diagram for the full dependency of 6 levels templates. (c) The practical sparse dependency structure for the sequence in (a), which is much sparser than (b).}
\label{fig:hierarchy_example}
\end{figure*}

An example is in Figure~\ref{fig:hierarchy_example}: solving an arithmetic problem in SingleEQ~\citep{koncel2015parsing} dataset. In this example, the sequence (including the prompt and the generated answer) is segregated into six levels and its dependency is significantly sparser than the full dependency. For instance, the $T_5$ (\textcolor{mycolor3}{brown}) token ``$26$'' only depends on the $T_4$ (\textcolor{Red}{red}) tokens ``$54-28=$'', \textit{regardless of} any information provided by other lower-level templates, such as where the equation appears ($T_1$) or the object bales ($T_3$) and so on.
We call the phenomenon as \textbf{sparse dependency} and give it a formal definition in Appendix~\ref{app:formal_definition_of_support_set}.
Dependency practically tends to be sparser than the full ones.
This phenomenon prevents deeply nested T/C structures from imposing excessive constraints on higher levels (as in the case of full dependency, where the $T_n$ depends on all $n-1$ levels), enabling the model to learn deeper and more complex structures.  

Until now, we expand our T-C template into hierarchical setting and this expansion makes sure that our theory can explain not only
the within-task generalization but also the task composition. With the hierarchical T-C structure, a model can learn, think and answer more like humans. It can learn each part of an answer relatively independently from different training samples and then combine them together to achieve very complex reasoning tasks.

In fact, we also provide: the formal definition of hierarchical T-C structure, the existence of corresponding Transformers, and the corresponding theorems that ensure hierarchical answer generation. Due to space reasons, we put them in appendix for reference.

\section{Formal definition, proposition and proof}
\label{app:proof}
\subsection{Impossible to learn}
\label{app:impossible}
We need the following upper bound of the VC dimension of parameterized class \citep{KARPINSKI1997169}:
\begin{lemma}
Considering the parameterized class
\begin{equation}
    F = \left\{ x\mapsto f(\theta, x):\theta\in\mathbb{R}^d \right\},
\end{equation}
for some $\{\pm 1\}$-valued function $f$. Suppose that, for each input $x\in\mathbb{R}^n$, there is an algorithm that computes $f(\theta, x)$ and this computation takes no more than $t$ operations of the following types:
\begin{enumerate}
    \item the arithmetic operations $+$, $-$, $\times$, $/$ on real numbers,
    \item jumps conditioned on $>$, $\geq$, $<$, $\leq$, $=$ and $\not =$ comparisons of real numbers,
    \item output 0 or 1, and 
    \item the exponential function $\alpha\mapsto e^\alpha$ on real numbers.
\end{enumerate}
Then $\operatorname{VCdim}(F)=O(t^2d^2)$.
\end{lemma}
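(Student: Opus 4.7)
The plan is to reduce the VC-dimension bound to a counting problem for sign patterns and then invoke a Khovanskii-style bound on sign configurations of Pfaffian (exponential-arithmetic) functions. Suppose $m$ points $x_1,\dots,x_m$ are shattered by $F$. By definition, as $\theta$ ranges over $\mathbb{R}^d$ the output vector
\begin{equation*}
\bigl(f(\theta,x_1),\dots,f(\theta,x_m)\bigr)\in\{0,1\}^m
\end{equation*}
realizes all $2^m$ possible labelings. So it suffices to upper bound the number of distinct such vectors by an explicit function of $t$ and $d$, and then solve for $m$.

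First I would linearize the computations. For each $x_i$ fixed, the program computing $f(\theta,x_i)$ uses $\leq t$ operations, but the path taken depends on $\theta$; at every branch, the decision is the sign of some expression $g(\theta)$ built from $\theta$'s coordinates by arithmetic and $\exp$. Concatenating all $m$ programs yields at most $N\le mt$ such atomic sign tests $g_1,\dots,g_N$, and each $g_j$ is a Pfaffian function on $\mathbb{R}^d$ whose associated Pfaffian chain has length at most $t$ (one exponential per $\exp$-operation encountered along the computation). In any connected region of $\mathbb{R}^d$ on which all of $g_1,\dots,g_N$ keep a fixed sign, every program executes the same control-flow path on every $x_i$ and hence outputs the same label vector. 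Therefore the number of realizable labelings is at most the number of sign cells carved out by $\{g_1,\dots,g_N\}$.

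The main step is bounding this cell count by Khovanskii's theorem (in the effective form of Gabrielov--Vorobjov): the number of distinct sign patterns of $N$ Pfaffian functions in $d$ variables with Pfaffian chain length $r$ and polynomially bounded format is at most $2^{O(r^2)}\cdot\mathrm{poly}(N,d)^{O(d)}$. Plugging in $N\leq mt$ and $r\leq t$ yields an upper bound of the shape
\begin{equation*}
2^{O(t^2)}\cdot (mtd)^{O(d)}.
\end{equation*}
Combining with the shattering lower bound $2^m$, taking logarithms, and using $\log(mtd)=O(\log m + \log t + \log d)$, I would solve $m\leq O(t^2)+O(d\log(mtd))$, from which the self-referential term is absorbed and one obtains $m=O(t^2 d^2)$, as claimed. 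The main obstacle is the careful invocation of the Pfaffian cell bound: I must verify that the particular family of expressions emerging from the program really forms a Pfaffian chain of length $O(t)$ with polynomially bounded format, and I must be precise enough with the Khovanskii constants to obtain the clean $t^2d^2$ bound rather than a weaker bound like $t^3 d^2$ or $t^2 d^3$. Following the argument of Karpinski--Macintyre, I would track the Pfaffian chain symbolically along the execution and bundle sign tests so that the dependence on $r=t$ appears only through the $2^{O(t^2)}$ factor, keeping the final estimate tight.
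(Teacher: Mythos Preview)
The paper does not prove this lemma; it is quoted as a known result from Karpinski and Macintyre (1997) and used only as a black box to conclude that a Transformer's VC dimension is polynomial in its parameters and computation length. So there is no in-paper proof to compare your proposal against.

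Your high-level plan --- reduce shattering to counting sign cells of the branch-test expressions in the parameter space $\mathbb{R}^d$, observe that those expressions are Pfaffian because the only transcendental primitive is $\exp$, and invoke a Khovanskii-type bound on sign configurations --- is exactly the Karpinski--Macintyre strategy, and you explicitly say you would follow their argument. One point deserves more care: you assert that each $g_j$ has Pfaffian chain length at most $t$ and then plug $r\le t$ into the cell-count bound. Khovanskii's theorem, however, requires a \emph{common} Pfaffian chain for all the $g_j$ simultaneously, and the exponentials appearing in the computation on $x_1$ are in general different functions of $\theta$ than those appearing for $x_2$. The common chain therefore has length up to $mt$, not $t$, and a naive substitution makes the $2^{O(r^2)}$ factor useless. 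Extracting the clean $O(t^2d^2)$ requires the more delicate degree-and-path bookkeeping carried out in the original paper, which you rightly flag as the main obstacle. A symptom of the slip is that your displayed inequality $m\le O(t^2)+O(d\log(mtd))$ would, if valid, yield $m=O(t^2+d\log(td))$, a bound far stronger than the stated $O(t^2d^2)$; that over-strength is a signal that the $r\le t$ substitution is too optimistic.
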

It is easy to check a Transformers meets these assumptions so that the VC dimension can be also bound as $O(t^2d^2)$. Actually, we only need the upper bound is a polynomial function. For a Transformers with input length $T$, layers $L$ and an upper bound of all the hidden dimension $d'$, the total computation is also a polynomial function of $T,L$ and $d'$. Therefore, the VC dimension of a Transformer with input length $T$ can be upper bound by a polynomial function of $T$. Referring to the definition of VC dimension, this means that Transformer cannot shatters the exponential number of input points. Notice that the VC dimension theory is defined in modeling the problem as a binary classification problem. However, in a binary classification problem the model no longer has enough expressive power, and it is also impossible to learn the continuous-valued vector output - the latter is what we really issue concerned.

\subsection{The formal definition of hierarchical template independence}
\label{app:assume_hierarchy_independent}
Here, we give the generalization of the Definition~\ref{def:groundtruth_formal} in the hierarchical situation. First, we need to generalize the conditions as follows:
\begin{definition}[$k$-th level alignment]
With a $n$ level hierarchical classification function $\mathcal{F}$, two sequences $\bm a$ and $\bm a'$ with same length are called \textit{$k$-th level aligned}, if and only if for any $1\leq t \leq |\bm a|$, at least one of the following conditions is satisfied:
\begin{equation}
    \mathcal{F}(a_t)=\mathcal{F}(a_t')\text{ and }a_t=a_t',
\end{equation}
or
\begin{equation}
    \mathcal{F}(a_t)\in T_{\geq k+1}\text{ and }\mathcal{F}(a_t')\in T_{\geq k+1}.
\end{equation}
\end{definition}
To show that the definition is the generalization about the T/C alignment and fixed template tokens, notice that if we set $n=2$ and $k=1$, the second condition says that both the tokens are content tokens while the first condition says that both are template tokens and the same. In this situation, the condition is the same as the ones that we use in Definition~\ref{def:groundtruth_formal} and Proposition~\ref{prop:answer_generation_formal}, i.e., the T/C classification is aligned and the template tokens are the same.
Then, the generalization of the Definition~\ref{def:groundtruth_formal} is as follows:
\vspace{5pt}
\begin{definition}[The groundtruth classification and the template-content generation model, hierarchical]
\label{def:fixed_tc_classification_hierarchy}
    For a hierarchical T/C classification function $\mathcal{F}$ with $n$ level and an autoregressive generation model $\mathcal{M}$, for any $1\leq k \leq n$, given the \textit{$k$-th level aligned} prefix sequences $\bm a_{1:t-1}$ and $\bm a_{1:t-1}'$, if the generated (next) token is also $k$-th level aligned during the autoregressive generation, we call the function $\mathcal{F}$ as a \textbf{groundtruth} T/C classification and the model $\mathcal{M}$ is a T-C model. It means for all sequences $\bm a$ and $\bm a'$ with the same length, and for all $1\leq t \leq |\bm a|$, \textbf{at least one} of the following equations should be satisfied:
    \begin{equation}
    \begin{aligned}
        &\mathcal{F}\left( \mathcal{M}(\bm a_{1:t-1}) \right) = 
        \mathcal{F}\left( \mathcal{M}(\bm a'_{1:t-1}) \right)\text{ and }\\ &\mathcal{M}(\bm a_{1:t-1})=\mathcal{M}(\bm a'_{1:t-1}),
    \end{aligned}
    \end{equation}
    \textbf{or}
    \begin{equation}
        \mathcal{F}\left( \mathcal{M}(\bm a_{1:t-1}) \right)\in T_{\geq k+1},\text{ and }
        \mathcal{F}\left( \mathcal{M}(\bm a'_{1:t-1}) \right)\in T_{\geq k+1},
    \end{equation}
    if the prefix sequences $\bm a_{1:t-1}$ and $\bm a_{1:t-1}'$ are $k$-th level aligned.
\end{definition}

\subsection{The formal definition of dependency matrix}
\label{app:formal_definition_of_support_set}
Formally, for a given sentence $\bm a$, when the next token is $T_{k}$, there is a \textbf{support set} $\bm D_k\in\{0,1\}^k$ such that
\begin{equation*}
\begin{aligned}
    g(f_{T_1}(\bm a), \dots, f_{T_k}(\bm a)) = g(f_{T_1}(\bm a^{(1)}), \dots, f_{T_k}(\bm a^{(k)})),\\ \forall \bm a^{(s)}\in \mathcal{S}^{(s)}, s=1,2,\dots,k,
\end{aligned}
\end{equation*}
where $\mathcal{S}^{(s)}$ is a one-point set $\{\bm a\}$ if the $s$-th element of $\bm D_k$ (denoted as $d_{ks}$) is $1$, otherwise $\mathcal{S}^{(s)}$ is the full space of the sequence with the same length.
In this situation, we refer to the $k$-th level as (conditional) \textbf{independent} of the $s$-th levels if $d_{ks}=0$.
The support set must exist because if we set it as $(1,1,\dots,1)$, it says nothing about the function $g$. Then We define the \textbf{dependency matrix} as a lower-triangle matrix where $\bm {D}_{k,1:k}=\bm D_k$. 

\subsection{The formal definition of label}
\label{app:formal_definition_of_labal}
Formally, we define

\begin{definition}[Label]
    For the given model $\mathcal{M}=(f_{T_1},\dots,f_{T_n},g)$, a sequence $\bm{a}_{1:t}$ and its dependency matrix $\bm D$, when the next token to be generated $a_{t+1}$ satisfies $\mathcal{F}(a_{t+1})=T_{k}$, the $k$-th level label of $\bm a_{1:t}$ (denoted as $\mathcal{L}_k(\bm a_{1:t})$) is defined as the set of sequences $\{\bm a_{1:t}'\}$:
    \begin{enumerate}[leftmargin=*]
        \item The T/C classification is aligned, i.e., $\mathcal{F}(\bm a_{1:t}')=\mathcal{F}(\bm a_{1:t})$.
        \item The combined sequence $\hat{\bm a}_{1:t}$ is constructed by replacing all the $\leq (k-1)$-level tokens in $\bm a_{1:t}$ with those in $\bm a'_{1:t}$.
        The replacement does not affect
        \begin{enumerate}[leftmargin=*]
            \item the T/C classification of the next token, which means
            \begin{equation*}
                \mathcal{F}(\mathcal{M}(\hat{ \bm {a}}_{1:t})) = \mathcal{F}(\mathcal{M}(\bm a_{1:t}))
            \end{equation*}
            \label{item:tc_class}
            \vspace{-10pt}
            \item the generation of the next token, which means 
                \begin{equation*}
                \begin{aligned}
                    &g(f_{T_1}(\bm a'_{1:t}),\dots,f_{T_{k-1}}(\bm a'_{1:t}), f_{T_{k}}(\bm a_{1:t}))
                    \\=&g(f_{T_1}(\bm a_{1:t}),\dots, f_{T_{k}}(\bm a_{1:t}));
                \end{aligned}
                \end{equation*}
                \label{item:generation}
                \vspace{-10pt}
            \item the dependency matrix, which means the dependency matrix $\bm D$ of the sequence $\bm a_{1:t}$ is also the dependency matrix of sequence $\hat{\bm{a}}$.
            \label{item:dependence}
        \end{enumerate}
    \end{enumerate}
\end{definition}
\vspace{10pt}
With the statement~\ref{item:tc_class}, \ref{item:generation} in the definition, we can rewrite the generation as
$a_{t+1}=g(f_{T_1}(\mathcal{L}_{k}(\bm a_{1:t})),\dots,f_{T_{k-1}}(\mathcal{L}_{k}(\bm a_{1:t})),$$f_{T_{k}}(\bm a_{1:t}))$, if $\mathcal{F}(a_{t+1})= T_{k}$.
It is worth noting the similarity between this definition and the definition of T-C model (Definition~\ref{def:groundtruth_formal} and \ref{def:fixed_tc_classification_hierarchy} in Appendix). To ensure the same next token generation, the ``having the same label'' condition is a relaxation of the ``having the same $\leq T_{k-1}$ token'' condition, whereby the exact template can be substituted with equivalent sequences.
For the sequence $\bm a'$ in the label set $\mathcal{L}_{k}(\bm a)$, the information from the dependent-level template tokens of $\bm a'$ is equivalent to those of $\bm a$.
It enables the possibility of combining two sequences from different sources during the ``continuation'' generation of the T-C model.
That is, the $k$-level template part from the original sequence $\bm a$ can be combined with the lower-level template part from another sequence $\bm a'$, which does not impact the generation of $T_{k}$ tokens, just like we can combine the template ``\texttt{according to the equation}'' with the content in ``\texttt{based on the formula <equ>}'' to generate an appropriate sentence ``according to the equation <equ>''.

With the statement~\ref{item:dependence}, it further claims the replacement does not introduce additional inter-level dependency. When the support set describes the dependency of the original sequence $\bm a$, the sequence can be replaced solely on these dependent levels, while the arbitrariness of other independent levels can still be maintained. For example, when we replace ``\texttt{based on the formula <equ>}'' with ``\texttt{according to the equation}'', the replacing sequence keeps the independence of further lower-level templates and therefore we can combine them.

We formally define \textit{label consistency} as follows:
\begin{definition}[Label consistency]
\label{def:label_consistency}
    For a set of sequences $\{\bm a^{(k)}\left|\right.k=1,\dots,n\}$ with $n$ levels of template, they have \textit{label consistency} if and only if 
    the following requirements are satisfied.
    \begin{enumerate}
        \item The T/C classification of these sequences is aligned.
    \begin{subequations}
        \begin{equation}
            \mathcal{F}(\bm a^{(i)}) = \mathcal{F}(\bm a^{(j)}),\quad\forall i,j\in \{1,\dots,n\}.
            \label{sequ:template_alignment}
        \end{equation}
        \item We construct the \textbf{combined sequence} $\hat{\bm a}$
        which takes the $T_k$ tokens from the corresponding sequence $\bm a^{(k)}$, i.e., $ \hat a_t =  a^{(k)}_t$ if $\mathcal{F}(a^{(i)}_t)=T_k,\forall i$.
        For any $0\leq t\leq |\hat{\bm{a}}|-1$, and $\mathcal{F}(\hat{a}_{t+1})=T_{k}$,
        \begin{equation}
            \bm \hat{\bm a}_{1:t}\in \mathcal{L}_k(\bm a^{(k)}_{1:t}).
        \label{sequ:compatibility}
        \end{equation}
        \vspace{-15pt}
    \end{subequations}
    \end{enumerate}
\end{definition}
The first requirement (Equation~(\ref{sequ:template_alignment})) serves the purpose of constructing the combined sequence $\hat{\bm a}$. As the key requirement, the second one (Equation~(\ref{sequ:compatibility})) ensures that the $k$-th sample can be merged with the 1st to $(k-1)$-th samples ``\textit{appropriately}'', collectively forming the $k$-th template, by requiring its $k$-th level label should match the combined sequence $\hat{\bm{a}}$. Here, when we say ``appropriately'', intuitively, it means that the combined sentence remains coherent and reasonable and, thus, for an ideal T-C autoregressive model, it can still generate the same $k$-level tokens.
At the same time, this requirement also determines the dependency matrix of the sequence $\hat{\bm{a}}$. Specifically, the $k$-th row of the dependency matrix is the same as the $k$-th row of the dependency matrix for the $k$-th sample.

\subsection{The proof of Proposition~\ref{prop:hierarchy}}
The formal description of the Proposition~\ref{prop:hierarchy}.
\begin{proposition}[Hierarchical answer generation, formal]
    Given a partial answer $\bm{a}_{1:t}$ as the input and an $n$-level hierarchical template-content classification function $\mathcal{F}$, and a T-C Transformer model $(f_{T_1},\dots,f_{T_n},g)$, we assume that there exist a set of training samples $\{\bm{a}^{(k)}|k=1,\dots,n\}$ that has \textit{label consistency} (with denoting the combined sequence as $\hat{\bm a}$), and the partial sequence $\hat{\bm{a}}_{1:t}=\bm{a}_{1:t}$ and the model $\mathcal{M}$ can remember them.
    We have the model can generate the answer $\bm a$ from $\bm{a}_{1:t}$ autoregressively as the same tokens as the combined sequence $\hat{\bm a}$, i.e., with the same $k$-level tokens as the training sample $\bm{a}^{(k)}$ for any $k$ from $1$ to $n$. Here, we require that each prompt of each sample $\bm p^{(k)}$ should be contained in the partial sequence $\bm a_{1:t}$ or be generated as a part of $T_{\leq k-1}$.
\end{proposition}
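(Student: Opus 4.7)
The plan is to prove the claim by induction on the position $t$, showing that the model's autoregressive output $\mathcal{M}(\bm{a}_{1:t})$ coincides with $\hat{a}_{t+1}$ at each step. The base of the induction is the hypothesis $\hat{\bm{a}}_{1:t}=\bm{a}_{1:t}$, so the initially given prefix already matches the combined sequence. For the inductive step, suppose $\bm{a}_{1:s}=\hat{\bm{a}}_{1:s}$ has been established for all $s$ up to the current position $t$, and let $k$ be the level of the next token, i.e., $\mathcal{F}(\hat{a}_{t+1})=T_k$.

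The first key step is to exploit the decomposition $(f_{T_1},\dots,f_{T_n},g)$ of the T-C Transformer constructed in Appendix~\ref{ssec:existence}: because $\hat{a}_{t+1}$ is of level $T_k$, the sparse dependency structure implies the generated token depends only on $f_{T_1},\dots,f_{T_k}$. By construction of $\hat{\bm{a}}$ the $T_k$-positions are copied directly from $\bm{a}^{(k)}$, so $f_{T_k}(\hat{\bm{a}}_{1:t})=f_{T_k}(\bm{a}^{(k)}_{1:t})$. The remaining lower-level representations $f_{T_j}(\hat{\bm{a}}_{1:t})$ for $j<k$ may differ from $f_{T_j}(\bm{a}^{(k)}_{1:t})$, but this is precisely what the label hypothesis controls: Equation~(\ref{sequ:compatibility}) of Definition~\ref{def:label_consistency} gives $\hat{\bm{a}}_{1:t}\in\mathcal{L}_k(\bm{a}^{(k)}_{1:t})$, and items~\ref{item:tc_class} and \ref{item:generation} of the label definition then say that swapping the lower-level template representations leaves both the T/C classification and the concrete next-token output of $g$ unchanged. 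Chaining the two equalities yields $\mathcal{M}(\bm{a}_{1:t})=\mathcal{M}(\bm{a}^{(k)}_{1:t})$; the remembering hypothesis (Definition~\ref{assumpt:well-trained}) then gives $\mathcal{M}(\bm{a}^{(k)}_{1:t})=a^{(k)}_{t+1}=\hat{a}_{t+1}$, completing the step.

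To keep the induction alive, one has to check the hypotheses survive to position $t+1$. Item~\ref{item:dependence} of the label definition is essential here: the dependency matrix of $\hat{\bm{a}}$ is preserved under the template swaps, so the label sets at subsequent positions continue to contain $\hat{\bm{a}}_{1:t+1}$; and because Equation~(\ref{sequ:compatibility}) is already stated for \emph{every} $t$ up to $|\hat{\bm{a}}|-1$, label consistency transfers to the next step automatically. The last condition of the proposition, requiring each prompt $\bm{p}^{(k)}$ to be available either in the initial partial answer or as tokens produced earlier at levels $T_{<k}$, is what permits invoking ``remember'' for each $\bm{a}^{(k)}$ in turn at the moment it is needed.

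The main obstacle I anticipate is the careful interleaving of three alignments that must hold simultaneously at every position: equality of $T_k$ tokens between $\hat{\bm{a}}$ and $\bm{a}^{(k)}$ from the construction, invariance of $g$ under lower-level template substitution from the label definition, and preservation of the hierarchical T/C classification along the autoregressive trajectory from Definition~\ref{def:fixed_tc_classification_hierarchy}. Getting the strengthened inductive hypothesis exactly right, so that it carries along equality of generated tokens, preservation of the dependency matrix, and $k$-level classification alignment all at once, is the delicate bookkeeping; once that is set up, each inductive step is essentially a direct unfolding of the label definition together with the sample-remembering assumption, and it mirrors the single-level proof of Proposition~\ref{prop:answer_generation_formal} layer by layer.
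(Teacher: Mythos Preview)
Your proposal is correct and follows essentially the same inductive argument as the paper: both proceed by induction on the position, use the $(f_{T_1},\dots,f_{T_n},g)$ decomposition together with $f_{T_k}(\hat{\bm a}_{1:t})=f_{T_k}(\bm a^{(k)}_{1:t})$, invoke label consistency (Equation~(\ref{sequ:compatibility})) to switch the lower-level inputs to those of $\bm a^{(k)}$, and then apply the remembering assumption to identify the next token with $a^{(k)}_{t+1}=\hat a_{t+1}$. One small terminological slip: the fact that a $T_k$ token depends only on $f_{T_1},\dots,f_{T_k}$ comes from the basic hierarchical T-C definition (the uni-directed dependency of Definition~\ref{def:fixed_tc_classification_hierarchy}), not from the ``sparse dependency'' notion, which is a further refinement; but this does not affect the argument.
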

\vspace{10pt}

\label{app:proof_of_hierarchy}
\begin{proof}
Here, we prove that if the assumption $\hat{\bm a}_{1:t+s}=\bm a_{t+s}$ holds for any the partial answer $a_{t+s}$ where $s$ takes value from $0$ to $l-t-1$, then $\hat{\bm a}_{1:t+s+1}=\bm a_{t+s+1}$ also holds, where we use $a_{t+s}$ to denote the generated partial sequence with length $t+s$. 

Without loss of generality, we assume the next token $\hat{\bm a}_{1:t+s+1}$ is a $T_k$ token. Because the training samples have label consistency, we have the T/C classification of the generating token $a_{1:t+s+1}$ is also a $T_k$ token, i.e.,
\begin{equation}
    \mathcal{F}(a_{t+s+1}) = \mathcal{F}(\hat{a}_{t+s+1})=T_k
\end{equation}
Therefore, the token can be generated by
\begin{equation}
    a_{t+s+1} = g(f_{T_1}(\bm a_{1:t+s}), \dots, f_{T_k}(\bm a_{1:t+s})).
\end{equation}
And we have the assumption $\hat{\bm a}_{1:t+s}=\bm a_{t+s}$, so we have
\begin{equation}
    a_{t+s+1} = g(f_{T_1}(\hat{\bm a}_{1:t+s}), \dots, f_{T_k}(\hat{\bm a}_{1:t+s})).
\end{equation}
Because the label consistency, we have $\hat{\bm a}_{1:t+s} \in \mathcal{L}_k(\bm a^{(k)}_{1:t+s})$ and notice that $f_{T_k}(\bm a^{(k)}_{1:t+s})) = f_{T_k}(\hat{\bm a}_{1:t+s}))$ because the combined sequence has the same $T_k$ tokens as the $k$-th sample $\bm a^{(k)}$, so we have
\begin{equation}
     g(f_{T_1}(\hat{\bm a}_{1:t+s}), \dots, f_{T_k}(\hat{\bm a}_{1:t+s})) = g(f_{T_1}(\bm a^{(k)}_{1:t+s}), \dots, f_{T_k}(\bm a^{(k)}_{1:t+s})).
\end{equation}
At the same time, according to the assumption of well-training and the prompt, we have that the model can generate the token $a^{(k)}_{t+s+1}$ given the sequence $\bm a^{(k)}_{1:t+s}$, so we have
\begin{equation}
    a_{t+s+1} = a^{(k)}_{t+s+1}.
\end{equation}
According to the definition of the combined sequence $\hat{\bm{a}}$, $\hat{a}_{t+s+1}$ takes value from $a^{(k)}_{t+s+1}$. So we finally prove that $a_{t+s+1} = \hat{a}_{t+s+1}$. 
\end{proof}

\section{Discussion of the template-content framework}
\label{app:discussion}
\begin{remark}[Practical training process]
    In our assumption, training samples should be in the format as the (prompt, question, answer) triplets while the Internet corpus may not be primarily presented in such a format. We point out that this alignment could be achieved in the crucial finetuning process i.e. RLHF or similar finetuning phase. In this phase, the model can realign previously learned corpus into the triplet format. 
\end{remark}

\begin{remark}[format of the prompt]
    There are two primary prompt types: zero-shot instruction and few-shot exemplars and our template-content structure is applicable to both. Zero-shot prompts aid the model in recalling the templates learned during training, while few-shot prompts can also provide explicit templates. By utilizing these few-shot exemplars, models can generate templates based on the exemplars and also leverage the knowledge from similar templates, while the content information should not be directly used in answer generation. This observation also explains why zero-shot learning is more challenging than few-shot learning, as zero-shot learning necessitates the model to independently generate templates. This concept aligns with the findings in \citet{min2022rethinking}, demonstrating that the primary performance improvement of few-shot prompt stems from describing the space of questions and answers rather than direct Q\&A mapping.
\end{remark}

\begin{remark}[Alignment of position]
    \label{remark:position_alignment}
    We claim semantic alignment is a more realistic setting for our T-C framework. However, employing semantic alignment introduces several challenges, such as describing position correspondence and considering the variations in position encoding.
    Nevertheless, we find it reasonable to embrace token-wise alignment, as we believe that a well-trained model can automatically bridge the gap between these two settings during training. By disregarding position offsets irrelevant to semantics, token-wise alignment can be achieved for semantically aligned samples through the introduction of blank characters.
    
    From a training point of view, the model may be able to quickly learn to what extent the position offset is irrelevant to semantics, such as an additional space, so it does not affect any representation and generation. With this ability, if the model fits well on one training sample, it can also fit well on another sample, which is only different in some position offsets. Based on the observation, we can assume the existence of the latter sample (i.e., the sample with the position offsets) and use the token-wise alignment assumption. 
    
    As an additional explanation for this ability, we believe that the semantic and position information are disentangled in the Transformer.
    From a simple test, it is easy to test that the semantic embedding and position embedding are roughly orthogonal for most open-source models and therefore disentangled, which means the model can capture the semantic information and position information separately. Second, some results are also shown that the semantic information and the position information can be learned by different heads. See the analysis in~\citet{voita2019analyzing}. 
\end{remark}
    
\begin{remark}[Output probability and diversity]
    Another simplification in our theoretical framework pertains to the sampling scheme. We assume there is one \textit{standard} token at each position while a more realistic setting involves the model's output being a probability distribution over the vocabulary and the output token is then sampled from the distribution. With random sampling, the model can generate diverse output with the same input.

    Fortunately, our framework readily accommodates this sampling approach with simple modifications. We posit the existence of a distribution over the sequence space, which can be learned from a sufficiently large corpus. Given the prompt, question, and partial sequence, the distribution is projected into a conditional probability, describing the probability of the subsequent tokens. 
    By replacing the individual training sample (or the sample set, if considering the hierarchical template) with this distribution, our framework seamlessly adapts to this setting. Formulating the sampling process into our framework will be our future work.
\end{remark}

\section{Experiments details and more results}
\label{app:experiment}

\subsection{Concatenate-last-letter dataset}
\label{app:aligned_dataset}
The template of the concatenate-last-letter dataset is:\\

Concatenate the last letters of the given words: \texttt{\textless{}word1\textgreater{}}, \texttt{\textless{}word2\textgreater{}}, \texttt{\textless{}word3\textgreater{}}, \texttt{\textless{}word4\textgreater{}}.\\
Let's think step by step.\\
1. The last letter of \texttt{\textless{}word1\textgreater{}} is \texttt{\textless{}letter1\textgreater{}}.\\
2. The last letter of \texttt{\textless{}word2\textgreater{}} is \texttt{\textless{}letter2\textgreater{}}.\\
3. The last letter of \texttt{\textless{}word3\textgreater{}} is \texttt{\textless{}letter3\textgreater{}}.\\
4. The last letter of \texttt{\textless{}word4\textgreater{}} is \texttt{\textless{}letter4\textgreater{}}.\\
5. Concatenating these letters together, we get \texttt{\textless{}answer\textgreater{}}.\\
Therefore, the answer is \texttt{\textless{}answer\textgreater{}}.

We produce the dataset by: (1) We collect the top 5,000 most commonly occurring English words from wiktionary\footnote{\href{https://en.wiktionary.org/wiki/Wiktionary:Frequency\_lists/English/Wikipedia\_(2016)}{https://en.wiktionary.org/wiki/Wiktionary:Frequency\_lists/English/Wikipedia\_(2016)}}. (2) Randomly sample words as \texttt{\textless{}word\textgreater{}} and extract the corresponding letters and results.

\subsection{Chicken-and-rabbit dataset}
\label{app:chicken_and_rabbit_dataset}
To construct the dataset, we first search 100 different problems on the Internet and then use GPT-4 to generate 5 answers for each question. There is 500 question-answer pairs. To make content replacement, we also use GPT-4 to help to replace them by giving them the question and the answer, with the prompt: ``Please help me to replace the content such as ``Here is an example of the chicken and rabbit problem and its step-by-step solution. Please replace the number in both the problem and the solution to make a new Q\&A pair. Please ensure all other tokens all the same. Please confirm the answer is still correct. '' Finally, we manually filter some generation with incorrect answer.

\subsection{variance of output}
\label{app:variance_of_output}
The specific variance at each position are shown in Figure~\ref{fig:more_variance}, 
where the bar shows the variance of the output distribution at each position. The variance has been normalized so the maximal value $1$ means the typical content, where the output is different one-hot vetors corresponding to the variable content tokens while the minimal value $0$ means the typical template, where the output distribution is all the same for different content replacement. In the figures, shorter template \textcolor{Green}{green} bars compared to 10 content \textcolor{Cyan}{blue} bars (4 words, 4 letters and 2 answers) indicate significant less variance on the template positions. The results suggests that real-world models behave as the T-C model defines. In other word, our T-C structure can be applied to real-world models.

Here all of Llama-2 models we used are fine-tuned by chat data, i.e., the Llama-2-xxb-chat-hf model proposed by Huggingface. The ROC curve is shown in Figure~\ref{fig:ROC1}. 

We also test on some other answer sequences generated by GPT-4 on the concatenate-last-letter task and follow the same process (labeling the content, replacing words and letters). These datasets differ in the specific template and content list but follows the same generating process. These templates are as follows:

Concatenate the last letters of the given words: \texttt{\textless{}word1\textgreater{}}, \texttt{\textless{}word2\textgreater{}}, \texttt{\textless{}word3\textgreater{}}, \texttt{\textless{}word4\textgreater{}}.\\
Let's think step by step.\\
1. Word: \texttt{\textless{}word1\textgreater{}}, last letter: \texttt{\textless{}letter1\textgreater{}}.\\
2. Word: \texttt{\textless{}word2\textgreater{}}, last letter: \texttt{\textless{}letter2\textgreater{}}.\\
3. Word: \texttt{\textless{}word3\textgreater{}}, last letter: \texttt{\textless{}letter3\textgreater{}}.\\
4. Word: \texttt{\textless{}word4\textgreater{}}, last letter: \texttt{\textless{}letter4\textgreater{}}.\\
Now, let us concatenate the last letters of each word: \texttt{\textless{}letter1\textgreater{}} + \texttt{\textless{}letter2\textgreater{}} + \texttt{\textless{}letter3\textgreater{}} + \texttt{\textless{}letter4\textgreater{}} = \texttt{\textless{}answer\textgreater{}}. Therefore, the concatenated result is \texttt{\textless{}answer\textgreater{}}.

The results are shown in Figure~\ref{fig:more_variance_2} and \ref{fig:ROC2}. The conclusion is the same as we shown in Figure~\ref{fig:more_variance} and in Section~\ref{ssec:variance}.

\begin{figure*}
    \centering
    \begin{subfigure}[b]{0.32\textwidth}
        \includegraphics[width=\textwidth]{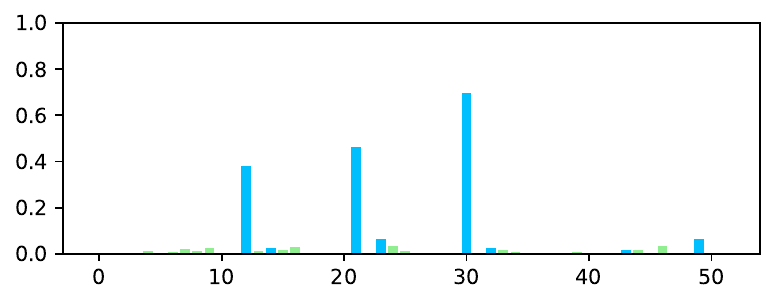}
        \caption{GPT2-medium(335b)}
    \end{subfigure}
    \hspace{-5pt}
    \begin{subfigure}[b]{0.32\textwidth}
        \includegraphics[width=\textwidth]{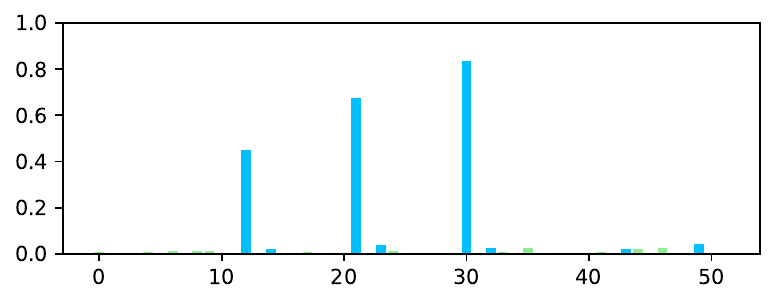}
        \caption{GPT2-large(774m)}
    \end{subfigure}
    \hspace{-5pt}
    \begin{subfigure}[b]{0.32\textwidth}
        \includegraphics[width=\textwidth]{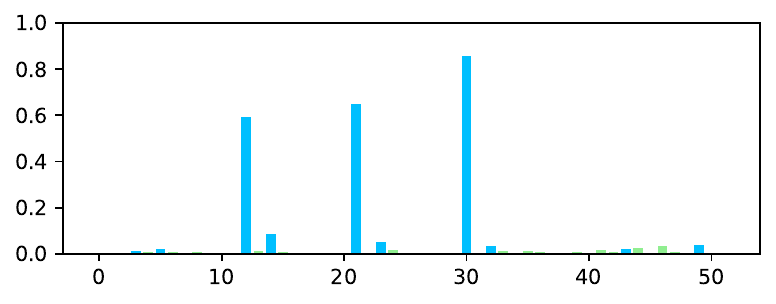}
        \caption{GPT2-xl(1.5b)}
    \end{subfigure}
    \hspace{-5pt}
    \begin{subfigure}[b]{0.32\textwidth}
        \includegraphics[width=\textwidth]{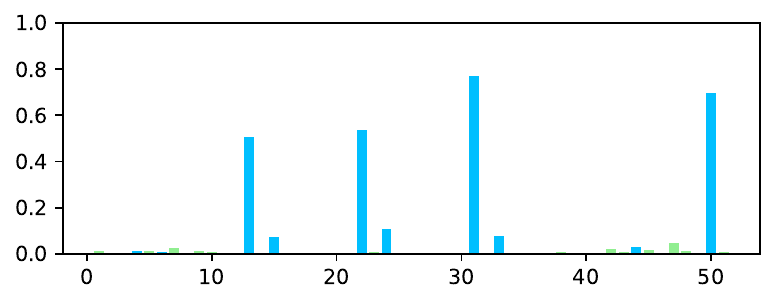}
        \caption{OPT-1.3b}
    \end{subfigure}
    \hspace{-5pt}
    \begin{subfigure}[b]{0.32\textwidth}
        \includegraphics[width=\textwidth]{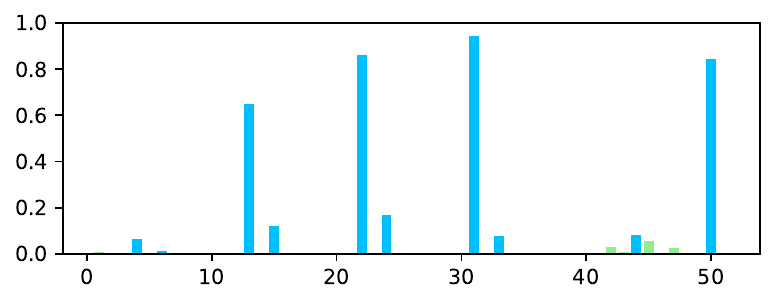}
        \caption{OPT-13b}
    \end{subfigure}
    \hspace{-5pt}
        \begin{subfigure}[b]{0.32\textwidth}
        \includegraphics[width=\textwidth]{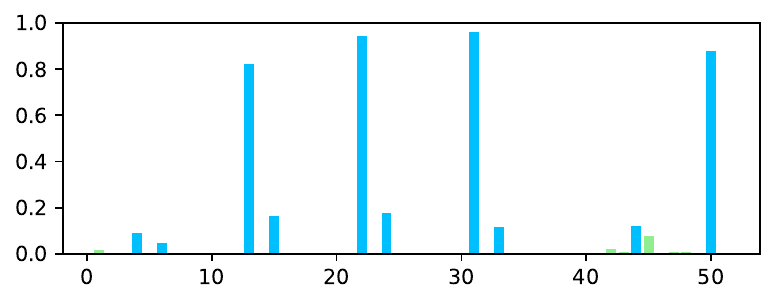}
        \caption{OPT-30b}
    \end{subfigure}
    \hspace{-5pt}
    \begin{subfigure}[b]{0.32\textwidth}
        \includegraphics[width=\textwidth]{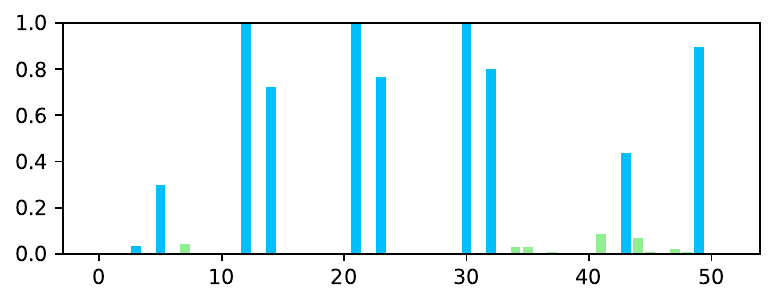}
        \caption{Llama2-7b}
    \end{subfigure}
    \hspace{-5pt}
    \begin{subfigure}[b]{0.32\textwidth}
        \includegraphics[width=\textwidth]{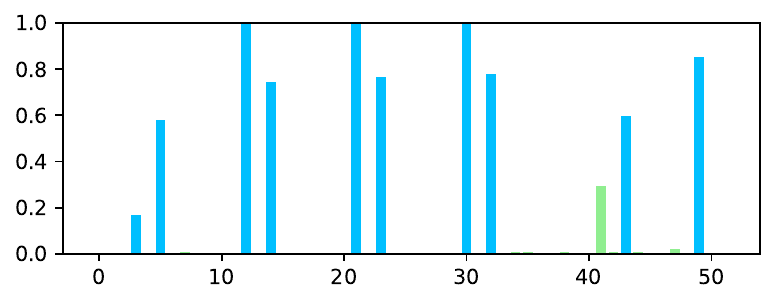}
        \caption{Llama2-13b}
    \end{subfigure}
    \hspace{-5pt}
    \begin{subfigure}[b]{0.32\textwidth}
        \includegraphics[width=\textwidth]{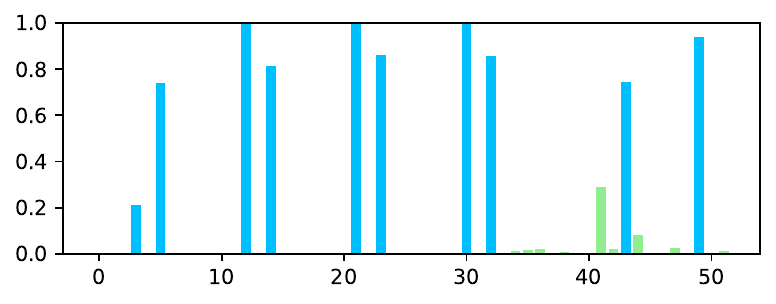}
        \caption{Llama2-70b}
    \end{subfigure}
    \caption{More results of the variance of the output distributions at template and content positions. X-axis: index of the tokens, y-axis: coefficient of variation. Blue bars: content tokens, a total of 10; green bars: template tokens.}
    \label{fig:more_variance}
    \vspace{-12pt}
\end{figure*}

\begin{figure*}[tbhp]
    \centering
    \includegraphics[width=0.7\textwidth]{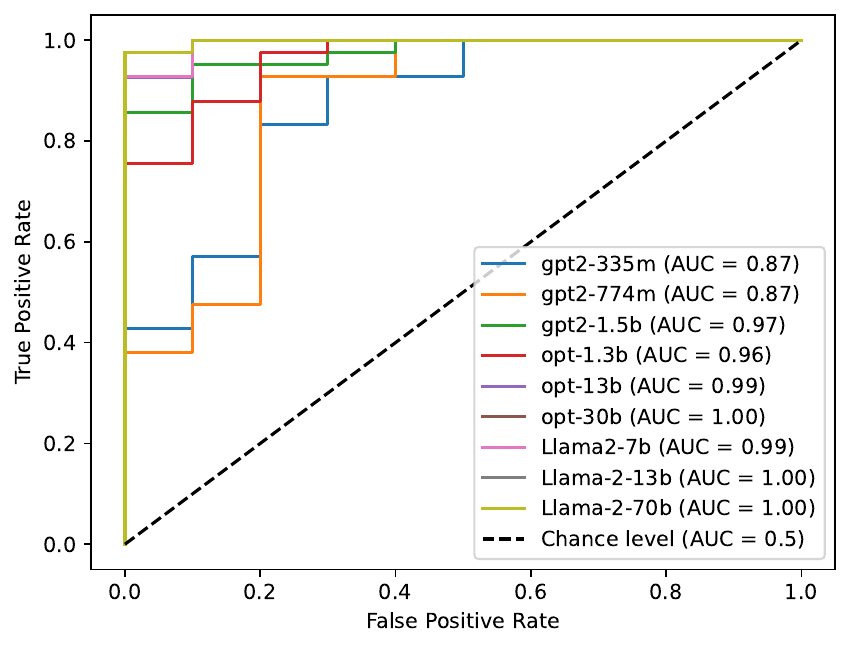}
    \caption{Receiver Operating Characteristic (ROC) of template tokens and content tokens.}
    \label{fig:ROC1}
    \vspace{-12pt}
\end{figure*}

\begin{figure*}
    \centering
    \begin{subfigure}[b]{0.32\textwidth}
        \includegraphics[width=\textwidth]{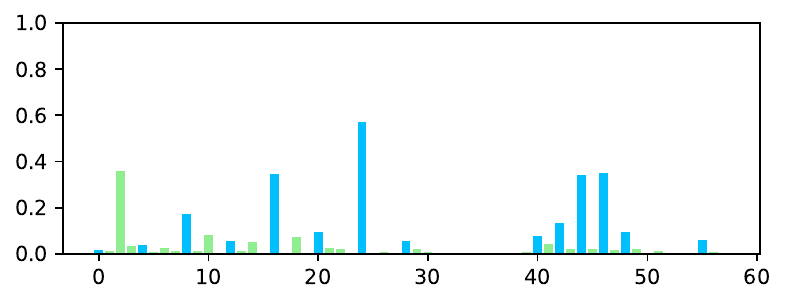}
        \caption{GPT2-medium(335b)}
    \end{subfigure}
    \hspace{-5pt}
    \begin{subfigure}[b]{0.32\textwidth}
        \includegraphics[width=\textwidth]{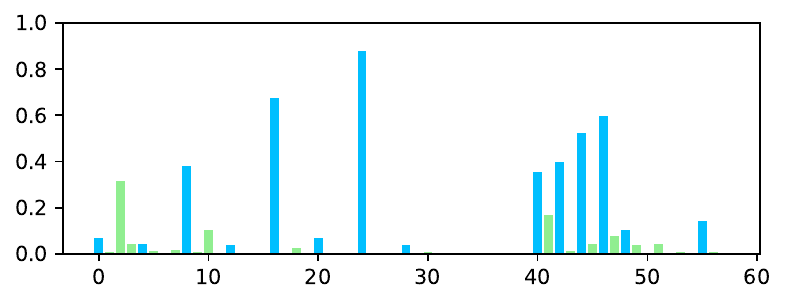}
        \caption{GPT2-large(774m)}
    \end{subfigure}
    \hspace{-5pt}
    \begin{subfigure}[b]{0.32\textwidth}
        \includegraphics[width=\textwidth]{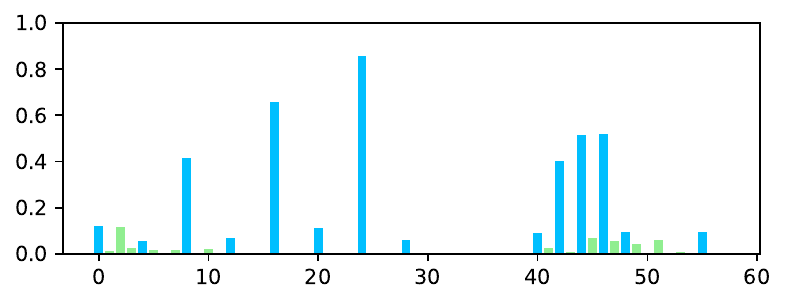}
        \caption{GPT2-xl(1.5b)}
    \end{subfigure}
    \hspace{-5pt}
    \begin{subfigure}[b]{0.32\textwidth}
        \includegraphics[width=\textwidth]{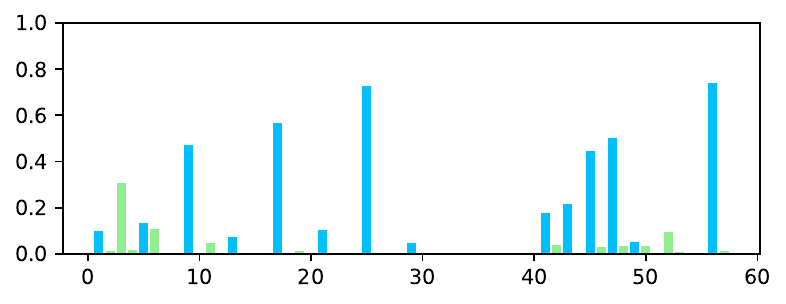}
        \caption{OPT-1.3b}
    \end{subfigure}
    \hspace{-5pt}
    \begin{subfigure}[b]{0.32\textwidth}
        \includegraphics[width=\textwidth]{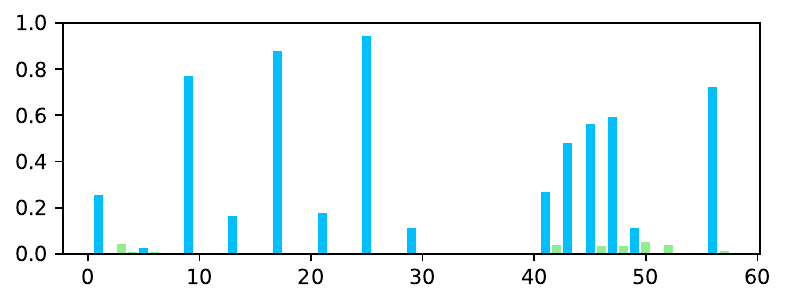}
        \caption{OPT-13b}
    \end{subfigure}
    \hspace{-5pt}
        \begin{subfigure}[b]{0.32\textwidth}
        \includegraphics[width=\textwidth]{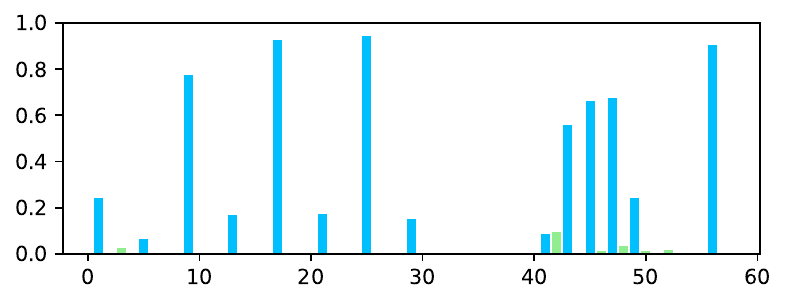}
        \caption{OPT-30b}
    \end{subfigure}
    \hspace{-5pt}
    \begin{subfigure}[b]{0.32\textwidth}
        \includegraphics[width=\textwidth]{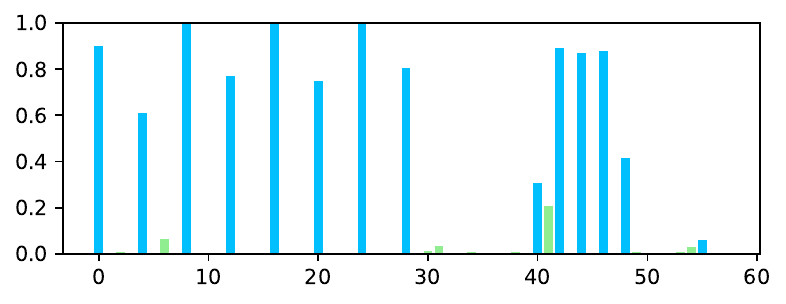}
        \caption{Llama2-7b}
    \end{subfigure}
    \hspace{-5pt}
    \begin{subfigure}[b]{0.32\textwidth}
        \includegraphics[width=\textwidth]{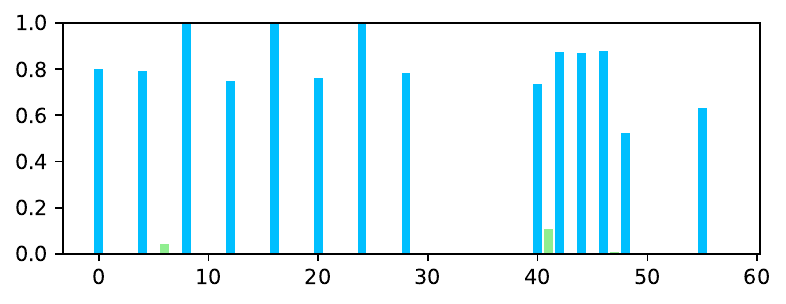}
        \caption{Llama2-13b}
    \end{subfigure}
    \hspace{-5pt}
    \begin{subfigure}[b]{0.32\textwidth}
        \includegraphics[width=\textwidth]{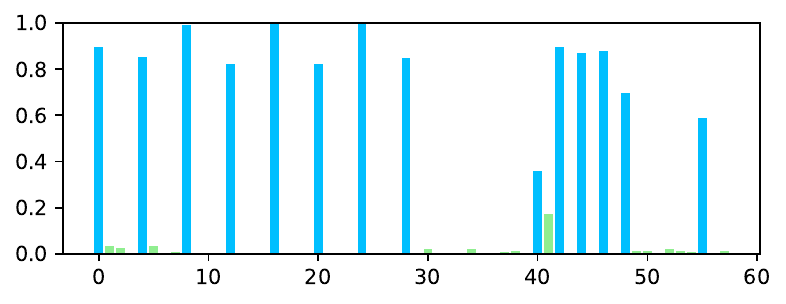}
        \caption{Llama2-70b}
    \end{subfigure}
    \caption{More results of the variance of the output distributions at template and content positions with \textbf{another template}. X-axis: index of the tokens, y-axis: coefficient of variation. Blue bars: content tokens, a total of 14; green bars: template tokens.}
    \label{fig:more_variance_2}
    \vspace{-12pt}
\end{figure*}

\begin{figure*}[tbhp]
    \centering
    \includegraphics[width=0.7\textwidth]{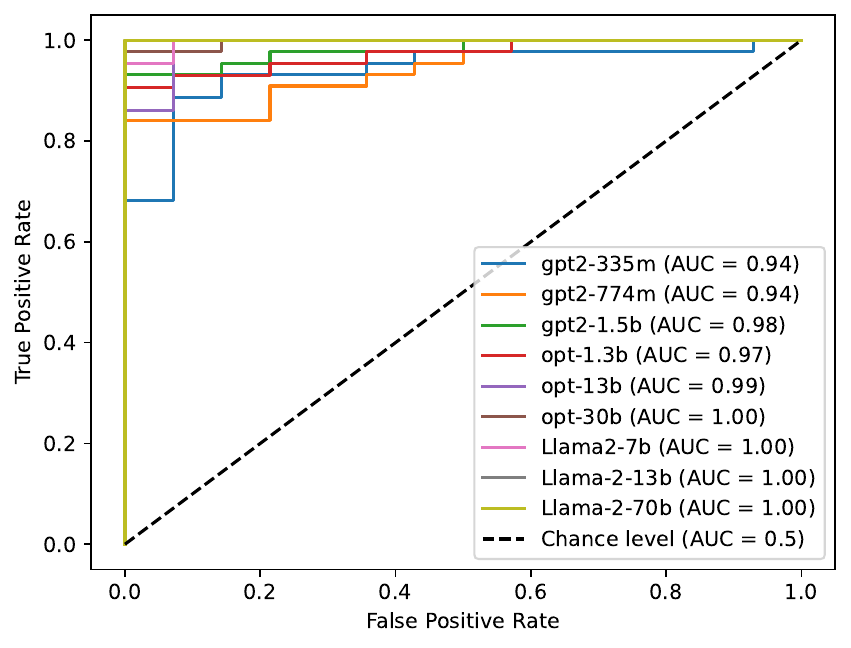}
    \caption{Receiver Operating Characteristic (ROC) of template tokens and content tokens with \textbf{another template}.}
    \label{fig:ROC2}
    \vspace{-12pt}
\end{figure*}

\subsection{Variance-based T/C classification}
\label{app:more_autoregressive_classify}

Here we exhibit some details of our variance-based T/C classifier.

\subsubsection{Word-level analyzer}
\label{app:word-level}
Because of the sub-word level tokenizer used by most LLMs, the replaced content could be divided into different numbers of tokens and affect our dataset alignment. So we need combine several sub-word tokens into a ``word''.
In practice, we found that GPT-4 uses the coarsest-grained tokenizer. So, we first use GPT-4's tokenizer to divide sentences into different tokens. For each token, we check whether its first position is a whitespace or some punctuation, including periods, commas, colons, and semicolons. If it is, we keep it as a single token; otherwise, we concatenate it with the preceding token. This process gives us a word-level analyzer, guaranteeing that each token (generated by a tokenizer of a practical model such as Llama-2 or GPT-3) corresponds to a single ``word'' (defined by our word-level analyzer). When we say ``word'', we always mean a token split by our word-level analyzer.

\subsubsection{Algorithm}
\label{app:algorithm}
The pseudo-code of the autoregressive T-C Classifier based on variance is shown in Algorithm~\ref{alg:TCClassifier}.
\begin{algorithm}[ht]
\caption{Autoregressive T-C Classifier based on variance}
\label{alg:TCClassifier}
\begin{algorithmic}[1] 
\State {\bfseries Input:} sentence $s$ with prompt $p$ as the beginning part, autoregressive model $M$, threshold $\theta$, replacing times $N$
\State Initialize empty lists: $T$, $C$
\State Initialize empty set for sequences with content replacement: $S\leftarrow((),\dots,())$ ($N$ empty sequences.)
\For{each word $w$ in prompt $p$}
\State \textbf{Manually} determine whether $w$ belongs to $T$ or $C$.
\For{$s'\in S$}
    \If{$w$ belongs to $T$}
        \State $s'\leftarrow s'+w$
    \Else
        \State \textbf{Manually} write the replaced content $replaceContent$
        \State $s'\leftarrow s'+replaceContent$
    \EndIf
\EndFor
\EndFor
        
\State $s \leftarrow s - p$

\Repeat
    \State $w\leftarrow s[0]$, $P\leftarrow\{\}$
    \For{each sentences $s'$ in $S$}
        \State Record the output distribution: $P.\text{add}(\mathcal{M}(s'))$
    \EndFor
    \State $variance \leftarrow$ measure the variance in the distribution set $P$.
    
    \If{$variance > \theta$}
        \State Classify $w$ as content: $C.\text{add}(w)$
        \For{each sentences $s'$ in $S$}
            \State $replaceContent\leftarrow \text{ArgMax}(\mathcal{M}(s'))$
            \State $s'\leftarrow s+replaceContent'$
        \EndFor
    \Else
        \State Classify $w$ as template: $T.\text{add}(w)$
        \State Add $w$ to each sentences in $S$.
    \EndIf
    \State $s\leftarrow s[1:]$
\Until{$s$ is empty.}

\State {\bfseries Output:}$T$, $C$
\end{algorithmic}
\end{algorithm}

\subsubsection{First-token-based classification}
We measure variance only on the first token for the T/C classification, and use it to represent the entire word. In most cases, determining the start of a word is sufficient to predict the entire word generation. However, in some cases, this may lead to the incorrect identification of $C$ as $T$. For example, when the generated content includes a pair of quotation marks and the tokenizer treats a single quotation mark as a separate token, the model recognizes the word as content (so the variance should have been higher) while the generation at the first position is fixed (a quotation mark) and the variance is low. It should be noted that simply splitting it into different words might violate the sub-word assumption mentioned above. Therefore, we choose to filter out some meaningless tokens from the output probabilities. Specifically, in the concatenate-last-letter dataset, we remove single whitespaces, line breaks \texttt{\textbackslash n}, and a space-prefixed left quotation mark. In the SingleEQ dataset, we additionally remove space-prefixed dollar signs $\$$ which are used to represent numbers in some training samples. For these tokens to be removed, we consider two methods. The first is to simply set their value to a very small number and then re-normalize through the softmax function. The second is to continue considering the generation at the next position: we first set the probability (after softmax) of these filtered tokens as zero. Then we just set the current position's output as one of the tokens to be removed and get the generation distribution at the next position. We multiply this output's probability distribution by the probability of this token in the original output and then add it to the original distribution. For example, if we want to remove whitespace's probability from the current probability, denoted as $P$, we first set its probability to zero, denoted it as $\hat{P}$. And then we assume the current output is just the whitespace and make the model generate the output probability at the next position, denoted as $P'$. Then we multiply this distribution by the probability of the whitespace $P(\text{whitespace})\cdot P'$ and add it to $\hat{P}$. It is easy to check the sum of the new distribution is still one. This procession is just like skipping the whitespace and replacing the next token in the current place. 
The latter method is more accurate in practice, preventing the filtered tokens from dominating the probabilities, but it significantly reduces inference speed. We only use the latter method in ``SingleEQ'' and we also set a threshold ($1\%$) to pass this procession when the probability of the filtered token does not exceed a certain value.

\subsubsection{Content generation}
Another issue is content generation. Since we no longer use human annotations and instead rely on the model's judgment of when to generate content, we use the model itself to generate tokens that should be filled in under different content replacements. The challenge here is how to determine whether content generation has finished. For example, in the concatenate-last-letters dataset, when the model detects the next token after ``3. The last letter of'' should be content, the model should generate some tokens that will be used to fill in this position while detecting the subsequent positions. However, the generated \texttt{<word3>} could be tokenized into several tokens and cannot be generated in one step. So we need a criterion to detect whether the generation has finished.
Here, we assume one content consists of a \textbf{single} word. We use the same criterion as our word-level analyzer mentioned above, which is to keep adding generated tokens until a token with a space (or punctuation, line break, etc.) is generated. At this point, all previously generated tokens can be combined into a ``word'' and used as a replacement for the current content.

\subsubsection{More results}
\label{app:more_results_of_tc_classification}
More results of the T/C classification are shown in Figure~\ref{fig:more_tc_1}, \ref{fig:more_tc_2} and \ref{fig:more_tc_3}.
\begin{figure}[t]
    \centering
    \includegraphics[width=0.65\linewidth]{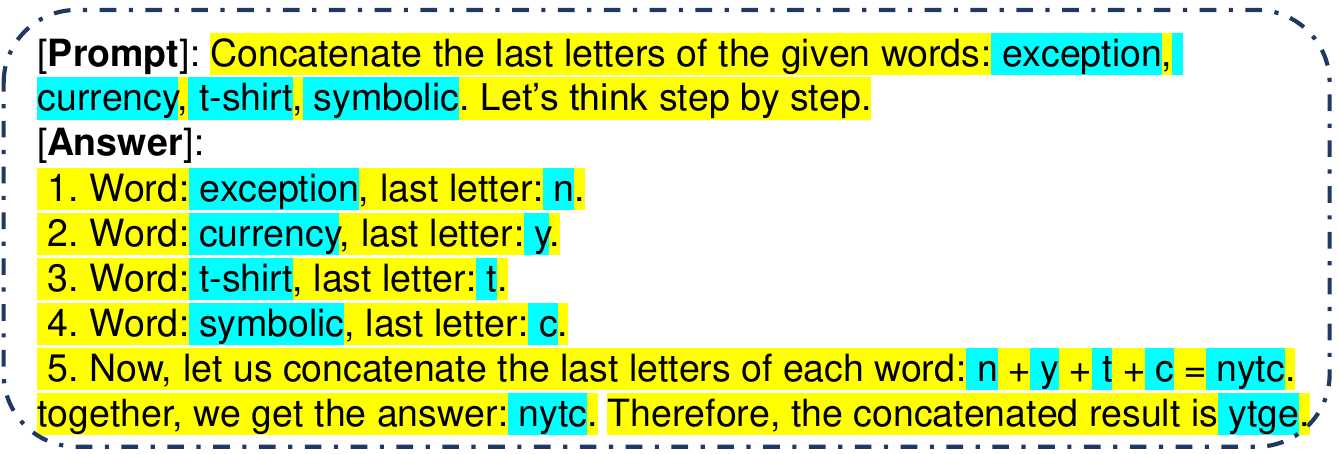}
    \caption{The classification results of the concatenate-last-letter dataset. Threshold: 0.4. Template: \hl{yellow}, content: \sethlcolor{lightblue}\hl{blue}. We mark the token whose classification conflicts with the human intuition as \textcolor{red}{red}.}
    \label{fig:more_tc_1}
\end{figure}
\begin{figure}[t]
    \centering
    \includegraphics[width=0.65\linewidth]{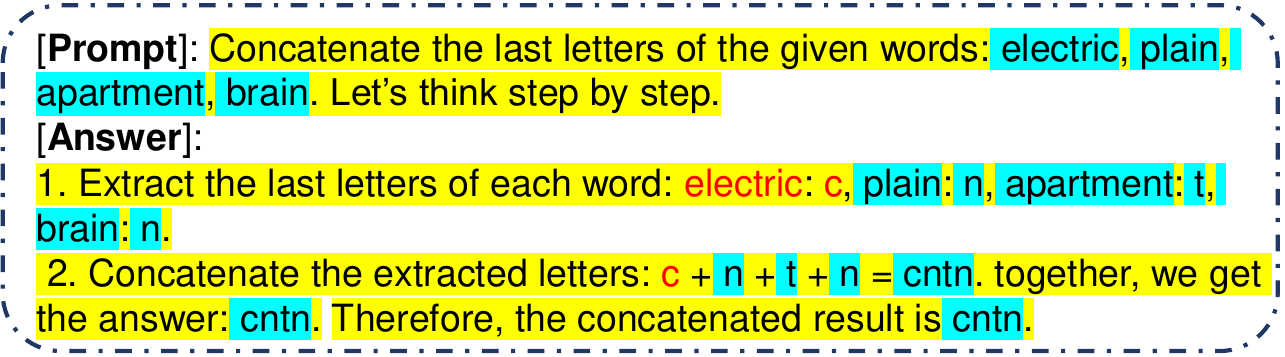}
    \caption{The classification results of the concatenate-last-letter dataset. Threshold: 0.3. Template: \hl{yellow}, content: \sethlcolor{lightblue}\hl{blue}. We mark the token whose classification conflicts with the human intuition as \textcolor{red}{red}.}
    \label{fig:more_tc_2}
\end{figure}
\begin{figure}[t]
    \centering
    \includegraphics[width=0.65\linewidth]{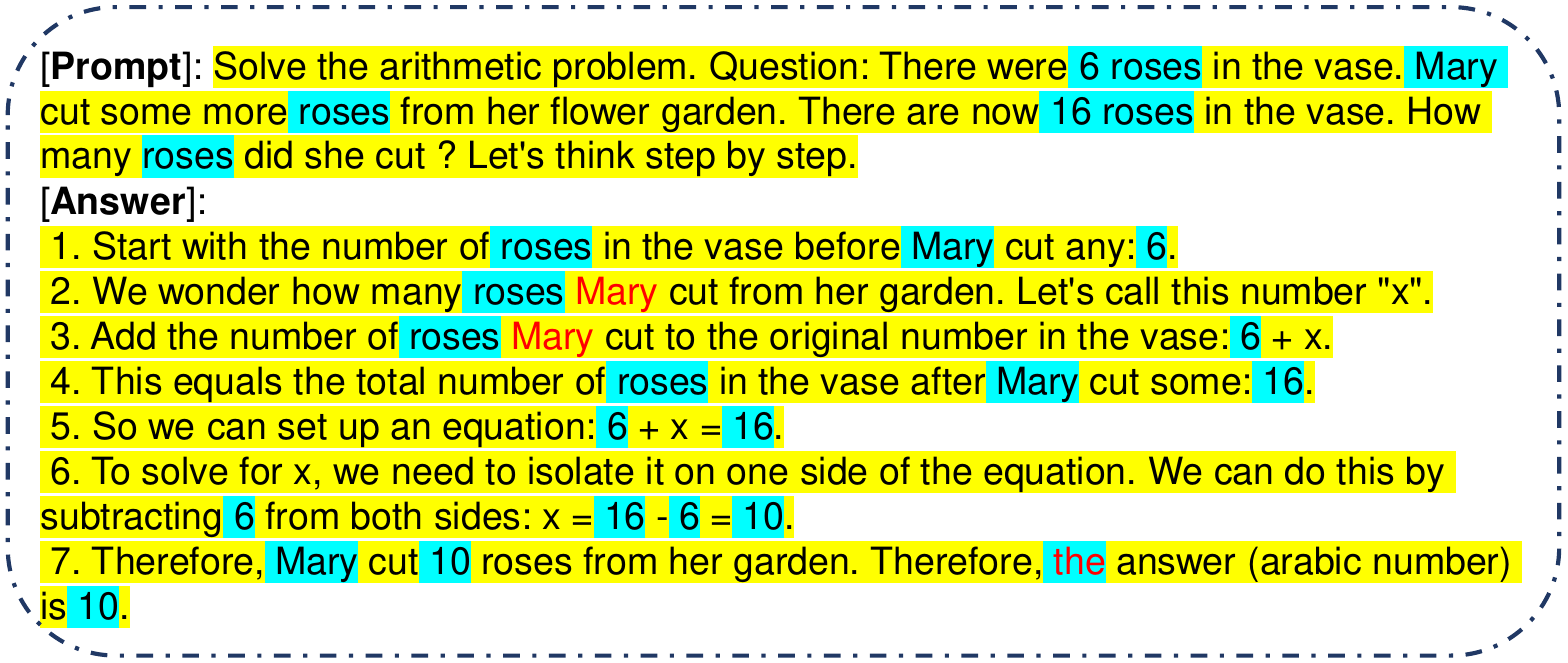}
    \caption{The classification results of the SingleEQ dataset. Threshold: 0.35. Template: \hl{yellow}, content: \sethlcolor{lightblue}\hl{blue}. We mark the token whose classification conflicts with the human intuition as \textcolor{red}{red}.}
    \label{fig:more_tc_3}
\end{figure}

\subsubsection{Error Analysis}
We find some ``misclassification'' actually contains more inspiration than just ``error'', such as the ``habit'' of an LM.
For example, in the Figure~\ref{fig:classifier} Right, the LMs seems to make mistakes on the word ``John'', ``to'' and ``Sam''. Recall that our classification model cannot see the current word but only the previous words to classify the current position to be generated. When we check the detailed logits of LMs, we find the misclassification is because (1) the LM tends to generate a ``Therefore'' at the beginning of the sentence, so it expect the first token should be a template token; (2) the LM generates ``gave Sam 43 seashells.'' instead of the given ``gave 43 seashells to Sam'', where the two phrase has the same meaning but different T/C classification (TCCCT / TCCTC), which leads to misclassification on the last two words.
Similarly, in Figure~\ref{fig:more_tc_3}, the LM tends to generate a ``which'' before the content ``Mary''. In a word, we find the misclassification always occurs in some cases where there are several equal expression with different T/C classification results, and these misclassification is actually due to different language usage between the generation habit of the LM and our groundtruth answer. 

\end{document}